\newtheorem{theorem}{Theorem}[section]
\newtheorem{proposition}[theorem]{Proposition}
\icmltitlerunning{Mitigating Memorization of Noisy Labels by Clipping the Model Prediction}
\begin{document}

\twocolumn[
\icmltitle{Mitigating Memorization of Noisy Labels \\ by Clipping the Model Prediction}




\begin{icmlauthorlist}
\icmlauthor{Hongxin Wei}{sustech}
\icmlauthor{Huiping Zhuang}{scut}
\icmlauthor{Renchunzi Xie}{ntu}
\icmlauthor{Lei Feng}{ntu}
\icmlauthor{Gang Niu}{riken}
\icmlauthor{Bo An}{ntu}
\icmlauthor{Yixuan Li}{wisc}
\end{icmlauthorlist}

\icmlaffiliation{sustech}{Southern University of Science and Technology. Work done while working at UW-Madison as a visiting scholar.}
\icmlaffiliation{ntu}{Nanyang Technological University}
\icmlaffiliation{wisc}{University of Wisconsin-Madison}
\icmlaffiliation{scut}{South China University of Technology}
\icmlaffiliation{riken}{RIKEN AIP}

\icmlcorrespondingauthor{Hongxin Wei}{weihx@sustech.edu.cn}

\icmlkeywords{Noisy Labels, Logit Clipping}

\vskip 0.3in
]



\printAffiliationsAndNotice{}  

\begin{abstract}
In the presence of noisy labels, designing robust loss functions is critical for securing the generalization performance of deep neural networks. Cross Entropy (CE) loss has been shown to be \emph{not} robust to noisy labels due to its \emph{unboundedness}.
To alleviate this issue, existing works typically design specialized robust losses with the symmetric condition, which usually lead to the underfitting issue.
In this paper, our key idea is to induce a loss bound at the logit level, thus universally enhancing the noise robustness of existing losses.
Specifically, we propose \emph{logit clipping} (\textbf{LogitClip}), which clamps the norm of the logit vector to ensure that it is upper bounded by a constant. 
In this manner, CE loss equipped with our LogitClip method is effectively bounded, mitigating the overfitting to examples with noisy labels.  
Moreover, we present theoretical analyses to certify the noise-tolerant ability of LogitClip. 
Extensive experiments show that LogitClip not only significantly improves the noise robustness of CE loss, but also broadly enhances the generalization performance of popular robust losses.
\end{abstract}

\section{Introduction}
The success of supervised learning  relies heavily on a massive amount of data, where each training instance is labeled by a human annotator. However, labels solicited from humans can often be subject to label noise. The issue of noisy labels has been commonly observed in many real-world scenarios, such as crowdsourcing \citep{yan2014learning} and online queries \citep{blum2003noise}. As a result, models trained on such data containing noisy labels suffer from poor generalization performance \citep{pmlr-v70-arpit17a, zhang2016understanding}. This gives rise to the importance of \emph{noise-robust learning}, where the goal is to train a robust classifier in the presence of noisy and erroneous labels. The learning task thus provides stronger flexibility and practicality than the standard supervised learning, where each training example is provided with clean ground truth. 

Despite the most popular loss in classification tasks, CE loss has shown to be non-robust in the presence of label noise~\citep{ghosh2017robust, zhang2018generalized}, due to its \emph{unboundedness}. Concerningly, the loss could approach infinity when the observed noisy label mismatches the model's prediction. Consequently, the model would attempt to counteract the large loss by overfitting the label noise, leading to poor generalization performance. To bound the loss value, 
previous methods typically design robust losses with principal constraint, \emph{e.g.}, symmetric condition \citep{ghosh2017robust, ma2020normalized}. Despite their theoretical robustness, these specialized losses can cause difficulty in optimization, leading to underfitting issues on complex datasets \citep{zhang2018generalized, zhou2021asymmetric}.
This motivates our method, which mitigates the undesirable influence of unbounded loss without modifying the loss function.

In this paper, our key idea is to induce the loss bound at the logit level, which universally enhances the noise robustness of existing losses. 
Specifically, we propose \emph{logit clipping} (\textbf{LogitClip}), which clamps the norm of the logit vector to ensure that it is upper bounded by a constant. Our method can be interpreted as a constrained optimization with an inequality constraint placed on the logit vector. Theoretically, we show that CE loss equipped with our LogitClip method is always bounded. Consequently, the difference between the risks caused by the derived hypotheses under noisy and clean labels is always bounded. More importantly, the two bounds (Theorem~\ref{tm:sym_robust} and Theorem~\ref{tm:asym_robust}) depend on the logit norm threshold, where a smaller threshold induces tighter bounds. In this way, our theoretical analyses demonstrate the noise-tolerant ability of LogitClip.

To verify the effectiveness of our method, we conduct thorough empirical evaluations on both simulated and real-world noisy datasets, including CIFAR-10, CIFAR-100~\cite{krizhevsky2009learning}, and WebVision \citep{li2017webvision} datasets. The results demonstrate that logit clipping can significantly improve the noise-robustness of CE loss, under symmetric, asymmetric, instance-dependent, and real-world label noise. 
For example, on CIFAR-10 with instance-dependent label noise, LogitClip improves the test accuracy of CE loss from 68.36\% to 86.60\% -- a \textbf{18.24}\% of direct improvement. More importantly, we show that LogitClip can boost the performance of a wide range of popular robust loss functions, including MAE~\citep{ghosh2017robust}, PHuber-CE~\citep{menon2020can}, SCE~\citep{wang2019symmetric}, GCE~\citep{zhang2018generalized}, Taylor-CE~\citep{feng2020can}, NCE~\citep{ma2020normalized}, AEL, AUL~\citep{zhou2021asymmetric}, Cores~\citep{cheng2020learning}, and Active Passive losses \citep{ma2020normalized}.

We summarize our contributions as follows:
\begin{enumerate}
    \item We propose LogitClip -- a simple and effective method to enhance the noise robustness of existing losses. The key idea is to clamp the norm of the logit vector to bound the loss value, as shown in Proposition~\ref{prop:bound} and Proposition~\ref{prop:lip_condition}.
    \item We provide theoretical analyses in Theorem~\ref{tm:sym_robust} and Theorem~\ref{tm:asym_robust} to certify the noise-tolerant ability of our LogitClip, where a smaller threshold induces tighter bounds.
    \item  We conduct extensive evaluation to show that LogitClip can improve the robustness of CE and popular robust losses across various types of label noise. We empirically show that our method is model-agnostic and also applicable in large-scale real-world scenarios.
    \item We perform ablation studies that lead to improved understandings of our method. In particular, we contrast with alternative methods (\emph{e.g.}, RELU6~\citep{howard2017mobilenets}, Clipping-by-value) and demonstrate the advantages of our method with Clipping-by-norm.
\end{enumerate}

\section{Motivation and Method}
\label{sec:logitclip}

\subsection{Preliminaries: The Unboundedness of CE loss}
\label{sec:base}

In this work, we consider the multi-class classification task with $K$ different classes. Let $\mathcal{X}\subset\mathbb{R}^{d}$ be the input space and $\mathcal{Y} = \{1, \ldots, K\}$ be the label space, we consider a training dataset with $N$ samples $\{\boldsymbol{x}_i, y_i\}^N_{i=1}$, where $\boldsymbol{x}_i \in \mathcal{X}$ is the $i$-th instance sampled \emph{i.i.d.} from an underlying data-generating distribution $\mathcal{P}$  and $y_i \in \mathcal{Y}$ is the observed (and potentially noisy) label. A classifier is a function that maps from the input space to the label space $f: \mathcal{X} \rightarrow \mathbb{R}^{K}$ with trainable parameter $\boldsymbol{\theta} \in \mathbb{R}^{p}$.

Here, we consider \emph{composite} losses, which are comprised of a base loss function $\phi$ and an invertible link function $\sigma: \mathbb{R} \rightarrow[0,1]$. For example, the most commonly used composite loss in multi-class classification is Softmax Cross Entropy (CE) loss:
\begin{equation}
\begin{aligned}
\label{eq:ce}
\mathcal{L}_\text{CE}\left(f(\boldsymbol{x} ; \boldsymbol{\theta}), y\right) &=  -\sum_{j=1}^{K} \boldsymbol{y}_{j} \log (\sigma\left(\boldsymbol{z}_{j}\right)) \\
&=-\sum_{j=1}^{K} \boldsymbol{y}_{j} \log \left(\frac{e^{\boldsymbol z_{j}}}{\sum_{k=1}^{K}e^{\boldsymbol z_{k}}}\right),
\end{aligned}
\end{equation}

where $\boldsymbol{z}_{j} = f_{j}\left(\boldsymbol{x}; \boldsymbol{\theta}\right)$ corresponds to the $j$-th element of model output for the sample $\boldsymbol{x}$, and $\boldsymbol{y}_{j}$ is the $j$-th element of one-hot encoded label vector $\boldsymbol{y}$. Here $\sigma$ denotes the softmax function, which is also the invertible link function. As an \emph{unbounded} loss function, CE is shown to be non-robust in the presence of label noise \citep{ghosh2017robust, zhang2018generalized}, since the observed labels might be incorrect. 
In particular, the gradients of CE can be shown as: 
$$\frac{\partial \mathcal{L}_{\mathrm{CE}}(f(\boldsymbol{x}, \boldsymbol{\theta}), y)}{\partial \boldsymbol{\theta}}=-\frac{1}{\sigma_{y}(\boldsymbol{z}))} \nabla_{\boldsymbol{\theta}} \sigma_{y}(\boldsymbol{z})),$$
From the equation, we find that CE pays more attention to those examples with lower confidences, i.e., hard examples (or noisy examples).
As $\sigma({z}) \to 0$, the unbounded loss would approach infinity, leading to severe overfitting issues on noisy labels. 


\subsection{Our Proposed Method}
\label{sec:method}

In this paper, we propose a general strategy that can make the loss function noise-robust, avoiding the inherent drawback of overfitting the label noise. Our key idea is to \emph{bound} the logit value in the link function. Our method is motivated by the following reformulation of the softmax CE loss:
\begin{equation}
\begin{aligned}
\label{eq:ce_logit}
    \mathcal{L}_{\mathrm{CE}}(\boldsymbol{z}, y) 
    &= \log (1+\sum_{j \neq y}e^{\boldsymbol{z}_j-\boldsymbol{z}_y}) \\
    &\leq \log (1+(K-1)\cdot e^{\boldsymbol{z}_{\max}-\boldsymbol{z}_{\min}}), 
\end{aligned}
\end{equation}

where $\boldsymbol{z}^{\max}$ and $\boldsymbol{z}^{\min}$ denote the maximum and minimum value in the logit vector $\boldsymbol{z} = f(\boldsymbol{x};\boldsymbol{\theta})$. The above formulation suggests that, if $\boldsymbol{z}^{\max} - \boldsymbol{z}^{\min}$ is upper bounded, $\mathcal{L}_{\mathrm{CE}}$ could not reach infinity, thereby preventing the model from overfitting to examples with noisy labels. 

To enforce the upper bound, we propose to bound the logit values by norm, which preserves the direction of the original logit vector. The training objective can be formalized as a constrained optimization with inequality constraint:
\begin{align*}
&\text{minimize} \quad \mathbb{E}_{(\boldsymbol{x},y)\sim\mathcal{P}}\left[\mathcal{L}_{\text{CE}}\left(f(\boldsymbol{x} ; {\boldsymbol\theta}), y\right)\right] \\
&\text{subject to} \quad \left\|f(\boldsymbol{x} ; {\boldsymbol\theta})\right\|_{p}\leq\alpha,
\end{align*}
where $\|\cdot \|_{p}$ denotes the $p$-norm (also called $\ell_p$-norm, $p \geq 1$) of the logit vector. However, performing constrained optimization in the context of modern neural networks is non-trivial, as explicitly shown in Appendix~\ref{app:normreg}. To circumvent the issue, we convert the objective into an alternative loss function that can be end-to-end trainable, strictly enforcing an upper bound of vector norm.

\paragraph{Logit Clipping.} We propose \emph{logit clipping} (dubbed LogitClip), which clamps the norm of the logit vector to ensure it is upper bounded by a constant. Formally, the new link function is defined as:
\begin{align}
\label{eq:logitclip}
\bar{\sigma}_{\tau}(\boldsymbol{z}) \doteq \sigma(\operatorname{clip}_{\tau}(\boldsymbol{z})), \ \operatorname{clip}_{\tau}(\boldsymbol{z}) \doteq \begin{cases}\tau \cdot \frac{\boldsymbol{z}}{\|\boldsymbol{z}\|_{p}} & \text { if }\|\boldsymbol{z}\|_{p} \geq \tau \\ \boldsymbol{z} & \text { else }\end{cases},
\end{align}
where $\tau$ is the upper bound of the norm. 
Our method ensures that: (1) the norm of the clamped logit vector $\operatorname{clip}_{\tau}(\boldsymbol{z})$ is bounded by $\tau$, and (2) the clamped logit vector preserves the same direction (and prediction) as the original logit vector. To increase flexibility, one can set the scale factor $\delta$ to a value that differs from the threshold $\tau$. In this form, the clipping function can be represented as:
\begin{align}
\label{eq:clipdiff}
\operatorname{clip}_{\tau, \delta}(\boldsymbol{z}) \doteq \begin{cases}\delta \cdot \frac{\boldsymbol{z}}{\|\boldsymbol{z}\|_{p}} & \text { if }\|\boldsymbol{z}\|_{p} \geq \tau \\ \boldsymbol{z} & \text { else }\end{cases}.
\end{align}
Note that LogitClip can be compatible with various loss functions, as we will later demonstrate in Section~\ref{sec:experiments}. In other words, we can employ LogitClip in the link function $\sigma$ of the composite losses, where the base loss function $\phi$ can be CE or other existing robust loss functions. Taking cross-entropy loss as an example, the new training objective now becomes:
\begin{align*}
&\text{minimize} \quad \mathbb{E}_{(\boldsymbol{x},y)\sim\mathcal{P}}\left[\mathcal{L}_{\text{CE}}\left(\operatorname{clip}_{\tau}(f(\boldsymbol{x} ; {\boldsymbol\theta})), y\right)\right].
\end{align*}

In what follows, we provide a formal analysis on the bound of the new loss function. For convenience, we denote CE loss with LogitClip as $\mathcal{L}^{\tau}_{\mathrm{CE}}$. Without loss of generality, we use the same value for the scale factor and the threshold for simplicity (as shown in Equation~\ref{eq:logitclip}). We start from the case of max norm ($p=\infty$), i.e., $-\tau \leq \operatorname{clip}_{\tau}(\boldsymbol{z}_{j}) \leq \tau$. Then we can derive an upper bound and a lower bound of $\mathcal{L}^{\tau}_{\mathrm{CE}}$.

\begin{proposition}[Upper and Lower Bounds of CE with LogitClip]
\label{prop:bound}
For any input $\boldsymbol{x}$ and any positive number $\tau \in \mathbb{R}^{+}$, CE loss with LogitClip defined in Eq.~(\ref{eq:logitclip}) has a lower bound and an upper bound: 
$$ \log (1 + (K - 1) \cdot e^{-2\tau}) \leq \mathcal{L}^{\tau}_{\mathrm{CE}}\left(\boldsymbol{z}, y\right)\leq \log (1 + (K - 1) \cdot e^{2\tau}).$$
\end{proposition}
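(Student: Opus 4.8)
The plan is to start from the logit-difference reformulation of cross-entropy given in Equation~\ref{eq:ce_logit}, now applied to the \emph{clipped} logit vector $\tilde{\boldsymbol z} = \operatorname{clip}_\tau(\boldsymbol z)$. Since that reformulation is an exact algebraic identity (it only uses the softmax normalization), it carries over verbatim when $\boldsymbol z$ is replaced by $\tilde{\boldsymbol z}$, so I would write $\mathcal L^\tau_{\mathrm{CE}}(f(\boldsymbol x;\boldsymbol\theta), y) = \log\bigl(1 + \sum_{j \neq y} e^{\tilde{\boldsymbol z}_j - \tilde{\boldsymbol z}_y}\bigr)$. The whole argument then reduces to bounding the inner sum $\sum_{j \neq y} e^{\tilde{\boldsymbol z}_j - \tilde{\boldsymbol z}_y}$ from above and below, after which monotonicity of $\log$ yields both claimed inequalities.

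First I would record the effect of clipping in the max-norm case ($p = \infty$). By construction $\|\tilde{\boldsymbol z}\|_\infty \leq \tau$, which is exactly the coordinatewise box constraint $-\tau \leq \tilde{\boldsymbol z}_j \leq \tau$ for every $j$. Consequently, for each $j \neq y$ the exponent satisfies $\tilde{\boldsymbol z}_j - \tilde{\boldsymbol z}_y \leq \tau - (-\tau) = 2\tau$ and $\tilde{\boldsymbol z}_j - \tilde{\boldsymbol z}_y \geq -\tau - \tau = -2\tau$, giving the termwise sandwich $e^{-2\tau} \leq e^{\tilde{\boldsymbol z}_j - \tilde{\boldsymbol z}_y} \leq e^{2\tau}$.

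Next I would sum these $K-1$ termwise bounds to get $(K-1) e^{-2\tau} \leq \sum_{j \neq y} e^{\tilde{\boldsymbol z}_j - \tilde{\boldsymbol z}_y} \leq (K-1) e^{2\tau}$, add $1$ throughout, and apply the increasing map $\log$. This delivers $\log(1 + (K-1) e^{-2\tau}) \leq \mathcal L^\tau_{\mathrm{CE}} \leq \log(1 + (K-1) e^{2\tau})$, which is precisely the statement.

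There is no real analytical obstacle; the argument is a short chain of monotonicity steps. The only points deserving care are conceptual rather than technical: verifying that the max-norm clip genuinely enforces the box $[-\tau,\tau]^K$ (so that the extremal gaps $\pm 2\tau$ used in the bounds are the correct tight values), and confirming that the reformulation of Equation~\ref{eq:ce_logit} applies unchanged to the clipped vector. Everything downstream of those observations is routine.
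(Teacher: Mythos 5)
Your proposal is correct and follows essentially the same route as the paper's own proof: both exploit the max-norm box constraint $-\tau \leq \operatorname{clip}_{\tau}(\boldsymbol{z})_j \leq \tau$ to sandwich every logit difference in $[-2\tau, 2\tau]$ and then read off the bounds from the identity $\mathcal{L}_{\mathrm{CE}} = \log\bigl(1 + \sum_{j \neq y} e^{\boldsymbol{z}_j - \boldsymbol{z}_y}\bigr)$ of Equation~\ref{eq:ce_logit}. If anything, your termwise sandwich $e^{-2\tau} \leq e^{\tilde{\boldsymbol{z}}_j - \tilde{\boldsymbol{z}}_y} \leq e^{2\tau}$ spells out the lower bound slightly more explicitly than the paper, which cites Equation~\ref{eq:ce_logit} (stated there only with the upper bound) for both directions.
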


The proof of Proposition~\ref{prop:bound} is provided in Appendix~\ref{app:proofs_1}. 
Through Proposition~\ref{prop:bound}, we show that cross-entropy loss equipped with LogitClip is bounded. The conclusion can be extended to other norms, since $\|\boldsymbol{z}\|_p \leq \|\boldsymbol{z}\|_q \leq \|\boldsymbol{z}\|_{\infty}$ for $p \geq q$. To provide a straightforward view, we show in Figure~\ref{fig:bound} how the hyperparameter $\tau$ and the class num $K$ affect the upper and lower bounds of CE with LogitClip. When $\tau \to \infty$, we have $0 \leq \mathcal{L}^{\tau}_{\mathrm{CE}} \leq \infty$, which is equivalent to the original loss $\mathcal{L}_{\mathrm{CE}}$ in Equation~\ref{eq:ce}. On the other hand, if $\tau \to 0$, the lower bound would be close to the upper bound, which may result in difficulties for loss optimization. We will analyze the effect of $\tau$ in detail in Section~\ref{sec:experiments}.

\begin{figure}[!t]
    \centering
    \includegraphics[width=0.40\textwidth]{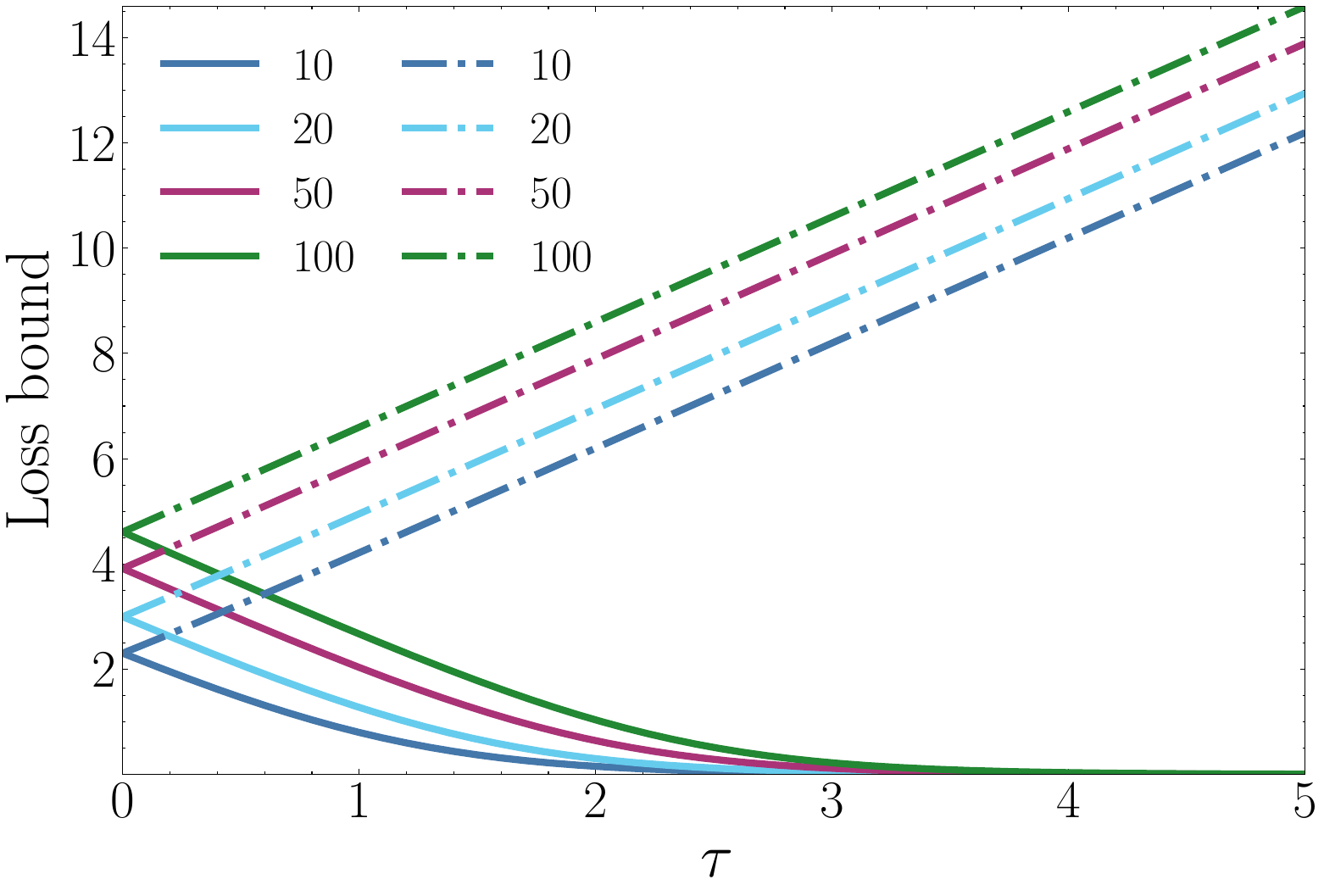}
    \caption{The effect of $\tau$ and $K$ on the Loss bound. The dashed lines denote the upper bounds and the solid lines show the lower bounds. Four colors are used to present bounds with various values of $K$.}
    \label{fig:bound}
\end{figure}

Based on proposition~\ref{prop:bound}, we further analyze the noise robustness of $\mathcal{L}^{\tau}_{\mathrm{CE}}$ with LogitClip. We denote the clean ground-truth label of $\boldsymbol{x}$ as $y^{\star}$. Here, we follow the most common setting where label noise is instance-independent \citep{ghosh2017robust, feng2020can, ma2020normalized}. Under this assumption, label noise can be either \emph{symmetric} (i.e., uniform) or \emph{asymmetric} (i.e., class-conditional). Let $\eta \in [0,1]$ be the overall noise rate and $\eta_{jk}$ be the class-wise noise rate from ground-truth class $j$ to class $k$, where $\eta_{jk} = p\left(y=k \mid y^{\star}=j\right)$. For symmetric noise, $\eta_{jk} = \frac{\eta}{K-1}$ for $j \neq k$ and $\eta_{jk} = 1-\eta$ for $j=k$. For asymmetric noise, $\eta_{jk}$ is conditioned on both the true class $j$ and the mislabeled class $k$. Given any classifier $f$ and loss function $\mathcal{L}$, the risk of $f$ under clean labels is defined as: $\mathcal{R}_{\mathcal{L}}(f)=\mathbb{E}_{(\boldsymbol{x}, y^\star)\sim \mathcal{P}_\text{clean}}[\mathcal{L}(f(\boldsymbol{x}), y^{\star})]$ and the risk under label noise rate $\eta$ is: $\mathcal{R}_{\mathcal{L}}^\eta(f)=\mathbb{E}_{(\boldsymbol{x}, y)\sim \mathcal{P}_\text{noisy}^\eta}[\mathcal{L}(f(\boldsymbol{x}), y)]$. Let $\tilde{f}$ and $f^{\star}$ be the global minimizers of $\mathcal{R}_{\mathcal{L}^{\tau}_{\mathrm{CE}}}^\eta(f)$ and $\mathcal{R}_{\mathcal{L}^{\tau}_{\mathrm{CE}}}(f)$, respectively.

\begin{theorem}
\label{tm:sym_robust}
Under symmetric label noise with $\eta \leq 1-\frac{1}{K}$,
$$
 0 \leq \mathcal{R}_{\mathcal{L}^{\tau}_{\mathrm{CE}}}(\tilde{f}) - \mathcal{R}_{\mathcal{L}^{\tau}_{\mathrm{CE}}}(f^{\star}) \leq \frac{\eta K}{(1-\eta)K-1} \cdot A^{K}_{\tau},
$$
where $A^{K}_{\tau} = \log\left(\frac{1+(K-1)e^{2\tau}}{1+(K-1)e^{-2\tau}}\right)$ is a constant that depends on $\tau$ and number of classes $K$.
\end{theorem}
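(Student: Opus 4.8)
The plan is to treat CE-with-LogitClip as satisfying an \emph{approximate} symmetric condition, replacing the exact constancy of $\sum_{k=1}^{K}\mathcal{L}(f(\boldsymbol{x}),k)$ used in \citet{ghosh2017robust} by the two-sided loss bound of Proposition~\ref{prop:bound}. First I would fix a point $\boldsymbol{x}$ with clean label $y^\star$ and expand the expected noisy loss under the symmetric model $\eta_{jk}=\eta/(K-1)$ for $k\neq j$. Writing $S(f,\boldsymbol{x}) \doteq \sum_{k=1}^{K}\mathcal{L}^\tau_{\mathrm{CE}}(f(\boldsymbol{x}),k)$ and substituting $\sum_{k\neq y^\star}\mathcal{L}^\tau_{\mathrm{CE}}(f(\boldsymbol{x}),k)=S(f,\boldsymbol{x})-\mathcal{L}^\tau_{\mathrm{CE}}(f(\boldsymbol{x}),y^\star)$, the class-conditional expectation collapses to an affine function of the clean per-sample loss; taking $\mathbb{E}_{\boldsymbol{x}}$ then gives the identity
\[
\mathcal{R}^\eta_{\mathcal{L}^\tau_{\mathrm{CE}}}(f) = \beta\,\mathcal{R}_{\mathcal{L}^\tau_{\mathrm{CE}}}(f) + \frac{\eta}{K-1}\,\mathbb{E}_{\boldsymbol{x}}\big[S(f,\boldsymbol{x})\big],
\qquad
\beta \doteq 1-\frac{\eta K}{K-1} = \frac{(1-\eta)K-1}{K-1}.
\]
The hypothesis $\eta\le 1-\tfrac1K$ is precisely what makes $\beta\ge 0$ (and $\beta>0$ whenever the claimed bound is finite), so the map from clean risk to noisy risk is nondecreasing.

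Next I would extract the two bounds from this identity. The lower bound $0\le \mathcal{R}_{\mathcal{L}^\tau_{\mathrm{CE}}}(\tilde f)-\mathcal{R}_{\mathcal{L}^\tau_{\mathrm{CE}}}(f^\star)$ is immediate, since $f^\star$ is by definition the global minimizer of the clean risk. For the upper bound I would solve the identity for $\mathcal{R}_{\mathcal{L}^\tau_{\mathrm{CE}}}$ and take the difference at $\tilde f$ and $f^\star$:
\[
\mathcal{R}_{\mathcal{L}^\tau_{\mathrm{CE}}}(\tilde f)-\mathcal{R}_{\mathcal{L}^\tau_{\mathrm{CE}}}(f^\star)
= \frac{1}{\beta}\Big[\mathcal{R}^\eta_{\mathcal{L}^\tau_{\mathrm{CE}}}(\tilde f)-\mathcal{R}^\eta_{\mathcal{L}^\tau_{\mathrm{CE}}}(f^\star)\Big]
-\frac{\eta}{\beta(K-1)}\Big[\mathbb{E}_{\boldsymbol{x}}[S(\tilde f,\boldsymbol{x})]-\mathbb{E}_{\boldsymbol{x}}[S(f^\star,\boldsymbol{x})]\Big].
\]
Because $\tilde f$ minimizes the noisy risk, the first bracket is $\le 0$ and may be dropped, leaving the gap controlled by $\mathbb{E}_{\boldsymbol{x}}[S(f^\star,\boldsymbol{x})]-\mathbb{E}_{\boldsymbol{x}}[S(\tilde f,\boldsymbol{x})]$.

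Finally I would bound this difference of sums using Proposition~\ref{prop:bound}: each of the $K$ summands lies in $[\log(1+(K-1)e^{-2\tau}),\,\log(1+(K-1)e^{2\tau})]$, so $S(\cdot,\boldsymbol{x})$ lies in an interval of length $K A^K_\tau$, whence $\mathbb{E}_{\boldsymbol{x}}[S(f^\star,\boldsymbol{x})]-\mathbb{E}_{\boldsymbol{x}}[S(\tilde f,\boldsymbol{x})]\le K A^K_\tau$. Substituting and using $\beta(K-1)=(1-\eta)K-1$ yields the stated $\frac{\eta K}{(1-\eta)K-1}A^K_\tau$. I expect the main obstacle to be bookkeeping rather than conceptual: one must keep the direction of the two optimality inequalities straight (they point opposite ways), verify $\beta>0$ so that dividing by it preserves the inequality, and observe that the exact symmetric-loss argument would force $S$ to be constant and the gap to vanish --- here the nonzero gap is exactly the ``defect'' from perfect symmetry, measured by the spread $A^K_\tau$ of the clipped loss.
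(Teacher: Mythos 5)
Your proposal is correct and follows essentially the same route as the paper's proof: the same affine identity $\mathcal{R}^{\eta}_{\mathcal{L}^{\tau}_{\mathrm{CE}}}(f)=\beta\,\mathcal{R}_{\mathcal{L}^{\tau}_{\mathrm{CE}}}(f)+\frac{\eta}{K-1}\,\mathbb{E}\left[S(f,\boldsymbol{x})\right]$ under symmetric noise, the same use of Proposition~\ref{prop:bound} to confine the per-class loss sum to an interval of length $K A^{K}_{\tau}$, and the same two optimality inequalities for $\tilde{f}$ and $f^{\star}$. Your explicit handling of the expectation over $\boldsymbol{x}$ in the sum term is in fact slightly more careful than the paper's write-up, but the argument is identical in substance.
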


\begin{theorem}
\label{tm:asym_robust}
Under asymmetric label noise with $\eta_{i j}<$ $1-\eta_i, \forall j \neq i, \forall i, j \in[k]$, where $\eta_{i j}=p(y=j \mid y^{\star}=$ $i), \forall j \neq i$ and $\left.\left(1-\eta_i\right)=p(y=i \mid y^{\star}=i)\right)$, then
$$
0 \leq \mathcal{R}_{\mathcal{L}^{\tau}_{\mathrm{CE}}}^\eta\left(f^*\right)-\mathcal{R}_{\mathcal{L}^{\tau}_{\mathrm{CE}}}^\eta(\tilde{f}) \leq B^{K}_{\tau},
$$
where $B^{K}_{\tau} = K\log\left(\frac{1+(K-1)e^{2\tau}}{1+(K-1)e^{-2\tau}}\right) \mathbb{E}_{(\boldsymbol{x}, y^{\star})\sim \mathcal{P}_\text{clean}}\left(1-\eta_i\right)>0$.
\end{theorem}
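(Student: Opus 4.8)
The plan is to prove the two inequalities separately, and the engine throughout is Proposition~\ref{prop:bound}: every value of $\mathcal{L}^{\tau}_{\mathrm{CE}}$ lies in the interval $[\log(1+(K-1)e^{-2\tau}),\,\log(1+(K-1)e^{2\tau})]$, so the \emph{range} of the clipped loss is exactly $A^{K}_{\tau}$. The lower bound is immediate: since $\tilde f$ is by definition the global minimizer of the noisy risk, $\mathcal{R}^{\eta}_{\mathcal{L}^{\tau}_{\mathrm{CE}}}(\tilde f)\le \mathcal{R}^{\eta}_{\mathcal{L}^{\tau}_{\mathrm{CE}}}(f^{*})$, which gives $\mathcal{R}^{\eta}_{\mathcal{L}^{\tau}_{\mathrm{CE}}}(f^{*})-\mathcal{R}^{\eta}_{\mathcal{L}^{\tau}_{\mathrm{CE}}}(\tilde f)\ge 0$. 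This is the only step that uses the minimality of $\tilde f$.

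For the upper bound I would first rewrite the noisy risk by conditioning on the clean label. Writing $i=y^{\star}$, the class-conditional noise model gives
$$\mathcal{R}^{\eta}_{\mathcal{L}^{\tau}_{\mathrm{CE}}}(f)=\mathbb{E}_{(\boldsymbol{x},y^{\star})}\Big[(1-\eta_i)\,\mathcal{L}^{\tau}_{\mathrm{CE}}(f(\boldsymbol{x}),i)+\sum_{j\neq i}\eta_{ij}\,\mathcal{L}^{\tau}_{\mathrm{CE}}(f(\boldsymbol{x}),j)\Big].$$
Subtracting this identity for $f^{*}$ and $\tilde f$ splits the gap $\mathcal{R}^{\eta}_{\mathcal{L}^{\tau}_{\mathrm{CE}}}(f^{*})-\mathcal{R}^{\eta}_{\mathcal{L}^{\tau}_{\mathrm{CE}}}(\tilde f)$ into a ``clean-label'' term weighted by $(1-\eta_i)$ and a ``noise'' term weighted by the off-diagonal rates $\eta_{ij}$. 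For the clean-label term, I bound each per-instance difference $\mathcal{L}^{\tau}_{\mathrm{CE}}(f^{*}(\boldsymbol{x}),i)-\mathcal{L}^{\tau}_{\mathrm{CE}}(\tilde f(\boldsymbol{x}),i)$ by the loss range $A^{K}_{\tau}$ (Proposition~\ref{prop:bound}), which yields at most $A^{K}_{\tau}\,\mathbb{E}[(1-\eta_i)]$. For the noise term, I again bound each loss difference by $A^{K}_{\tau}$ to get at most $A^{K}_{\tau}\,\mathbb{E}[\sum_{j\neq i}\eta_{ij}]=A^{K}_{\tau}\,\mathbb{E}[\eta_i]$, and then invoke the asymmetric-noise hypothesis $\eta_{ij}<1-\eta_i$ to replace $\eta_i=\sum_{j\neq i}\eta_{ij}<(K-1)(1-\eta_i)$, giving at most $(K-1)A^{K}_{\tau}\,\mathbb{E}[(1-\eta_i)]$. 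Summing the two contributions produces $\big(1+(K-1)\big)A^{K}_{\tau}\,\mathbb{E}[(1-\eta_i)]=B^{K}_{\tau}$, as claimed.

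The delicate point -- and the step I expect to need the most care -- is the noise term. The per-class loss differences $\mathcal{L}^{\tau}_{\mathrm{CE}}(f^{*},j)-\mathcal{L}^{\tau}_{\mathrm{CE}}(\tilde f,j)$ can take either sign, so I cannot apply the constraint $\eta_{ij}<1-\eta_i$ termwise against signed quantities; I must first control the \emph{magnitude} of each loss difference by the range $A^{K}_{\tau}$ and only afterwards use the noise constraint on the aggregate $\eta_i$. It is worth noting that a termwise-tight argument would instead collapse both contributions to a combined $A^{K}_{\tau}$, so $B^{K}_{\tau}$ is the looser but more interpretable form obtained by routing the noise term through $(K-1)(1-\eta_i)$; one should check it is genuinely an upper bound, which holds because the hypothesis $\eta_{ij}<1-\eta_i$ forces $\eta_i<1-\tfrac1K$ and hence $K\,\mathbb{E}[(1-\eta_i)]>1$. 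The clean-optimality of $f^{*}$, incidentally, is not needed for the upper bound -- only the boundedness from Proposition~\ref{prop:bound} and the noise-rate condition are.
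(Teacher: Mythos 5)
Your proof is correct, and while it opens with the same decomposition of the noisy risk by conditioning on the clean label, it resolves the crucial step by a genuinely different and more elementary route than the paper. The paper rewrites $\mathcal{L}^{\tau}_{\mathrm{CE}}(f(\boldsymbol{x}),y^{\star})$ using $\sum_{j=1}^{K}\mathcal{L}^{\tau}_{\mathrm{CE}}(f(\boldsymbol{x}),j)\le K\log(1+(K-1)e^{2\tau})$ (and the matching lower bound), which expresses the gap as $B^{K}_{\tau}$ plus a residual $\mathbb{E}\left[\sum_{j\neq y^{\star}}\lambda_j\left(\mathcal{L}^{\tau}_{\mathrm{CE}}(\tilde f(\boldsymbol{x}),j)-\mathcal{L}^{\tau}_{\mathrm{CE}}(f^{\star}(\boldsymbol{x}),j)\right)\right]$ with $\lambda_j=1-\eta_i-\eta_{ij}>0$; to show the residual is nonpositive, the paper additionally assumes that $f^{\star}$ attains the pointwise lower bound of Proposition~\ref{prop:bound}, which forces saturated clipped logits ($\tau$ on $y^{\star}$, $-\tau$ elsewhere) and hence makes $\mathcal{L}^{\tau}_{\mathrm{CE}}(f^{\star}(\boldsymbol{x}),j)$ maximal for every $j\neq y^{\star}$. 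You avoid that structural assumption entirely: you bound every per-class loss difference by the range $A^{K}_{\tau}$ from Proposition~\ref{prop:bound} and then convert the noise term via the aggregate consequence $\eta_i=\sum_{j\neq i}\eta_{ij}<(K-1)(1-\eta_i)$, landing exactly on $K A^{K}_{\tau}\,\mathbb{E}[1-\eta_i]=B^{K}_{\tau}$ (your care in taking magnitudes before invoking the noise condition is exactly right, since the signed differences cannot be paired with $\eta_{ij}<1-\eta_i$ termwise). What each buys: your argument is strictly more general --- the upper bound holds for any pair of hypotheses, needing neither the clean-optimality of $f^{\star}$ nor the paper's saturation assumption, which is an extra hypothesis the paper introduces with ``we assume'' and which need not hold for restricted hypothesis classes or stochastic clean labels; the paper's route keeps the noise condition alive termwise through $\lambda_j>0$ and discards a provably nonpositive residual, so it is in principle sharper, though both yield the same constant $B^{K}_{\tau}$. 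Your side observation is also apt: a direct range argument bounds the gap by $A^{K}_{\tau}$ alone, which is tighter than $B^{K}_{\tau}$ since $K\,\mathbb{E}[1-\eta_i]>1$ under the stated noise condition, so the theorem's constant is loose under either proof.
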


The proofs of the above two Theorems are provided in Appendix~\ref{app:proofs_2} and Appendix~\ref{app:proofs_3}, respectively.
Theorem~\ref{tm:sym_robust} and Theorem~\ref{tm:asym_robust} show that with LogitClip, the difference of the risks caused by the derived hypotheses $\tilde{f}$ and $f^{\star}$ under noisy and clean labels is always bounded. More specifically, the two bounds depend on the parameter $\tau$. With a smaller $\tau$, both the $A^{K}_{\tau}$ in Theorem~\ref{tm:sym_robust} and the $B^{K}_{\tau}$ in Theorem~\ref{tm:asym_robust} become smaller, indicating tighter bounds. The above analysis provably demonstrates the noise-tolerant ability of cross-entropy loss with LogitClip method. \textbf{We extend our analysis to an instance-dependent setting in Appendix \ref{app:theory_dependent}}. We proceed by a general analysis of composite losses with LogitClip.

\vspace{5pt}
\begin{proposition}
\label{prop:lip_condition}
Given any base loss $\phi(x)$ that satisfies the Lipschitz condition with constant $L$ on the domain $ M^{K}_{\tau} \leq x \leq N^{K}_{\tau}$, the resulting composite loss with $\tilde{\sigma}_{\tau}$ defined in Equation~$\ref{eq:logitclip}$ is bounded:
$$
\left|\mathcal{L}^{\tau}_{\phi}\left(f(\boldsymbol{x} ; {\boldsymbol\theta}), y\right)\right| \leq L\left(N^{K}_{\tau} - M^{K}_{\tau}\right) + \left|\phi(M^{K}_{\tau})\right|,
$$
where $M^{K}_{\tau} = \frac{1}{1+(K-1)\cdot e^{2\tau}}$ and $N^{K}_{\tau} = \frac{1}{1+(K-1)\cdot e^{-2\tau}}$.
\end{proposition}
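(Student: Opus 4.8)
The plan is to reduce the statement to a one-dimensional Lipschitz estimate by first pinning down the exact interval in which the argument fed to the base loss $\phi$ can live. Recall that a composite loss evaluates its base loss on the link-transformed score of the observed label, so that $\mathcal{L}^{\tau}_{\phi}(f(\boldsymbol{x};\boldsymbol{\theta}), y) = \phi\bigl(\bar{\sigma}_{\tau}(\boldsymbol{z})_{y}\bigr)$, where $\boldsymbol{z} = f(\boldsymbol{x};\boldsymbol{\theta})$ and $\bar{\sigma}_{\tau}$ is the clipped link from Equation~\eqref{eq:logitclip}. (For CE this recovers $-\log \bar{\sigma}_{\tau}(\boldsymbol{z})_{y}$, and for MAE it recovers $2(1-\bar{\sigma}_{\tau}(\boldsymbol{z})_{y})$ since the softmax sums to one, so the composite loss is indeed a one-variable function of the true-class score.) Thus the entire problem is to control $\phi$ on the range of the scalar $x \doteq \bar{\sigma}_{\tau}(\boldsymbol{z})_{y} \in [0,1]$.

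First I would establish that $M^{K}_{\tau} \leq x \leq N^{K}_{\tau}$. For the max-norm case ($p=\infty$), clipping forces every coordinate of $\operatorname{clip}_{\tau}(\boldsymbol{z})$ into $[-\tau,\tau]$; the softmax probability of the true class is then smallest when its own clipped logit sits at $-\tau$ and the remaining $K-1$ logits sit at $+\tau$, and largest in the opposite configuration, giving exactly $M^{K}_{\tau} = \tfrac{1}{1+(K-1)e^{2\tau}}$ and $N^{K}_{\tau} = \tfrac{1}{1+(K-1)e^{-2\tau}}$. Equivalently, this is just a restatement of Proposition~\ref{prop:bound}: since $\mathcal{L}^{\tau}_{\mathrm{CE}} = -\log x$, the bounds there read $\log(1+(K-1)e^{-2\tau}) \leq -\log x \leq \log(1+(K-1)e^{2\tau})$, which exponentiate precisely to $M^{K}_{\tau} \leq x \leq N^{K}_{\tau}$. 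For general $p \geq 1$ I would invoke $\|\boldsymbol{z}\|_{\infty} \leq \|\boldsymbol{z}\|_{p}$, so that a $p$-norm constraint of $\tau$ still keeps each coordinate within $[-\tau,\tau]$ and the same interval applies.

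With the interval in hand, the last step is elementary: on $[M^{K}_{\tau}, N^{K}_{\tau}]$ the Lipschitz hypothesis together with the triangle inequality give $|\phi(x)| \leq |\phi(x) - \phi(M^{K}_{\tau})| + |\phi(M^{K}_{\tau})| \leq L\,|x - M^{K}_{\tau}| + |\phi(M^{K}_{\tau})|$, and since $M^{K}_{\tau} \leq x \leq N^{K}_{\tau}$ we have $|x - M^{K}_{\tau}| \leq N^{K}_{\tau} - M^{K}_{\tau}$, yielding the claimed bound $L(N^{K}_{\tau} - M^{K}_{\tau}) + |\phi(M^{K}_{\tau})|$.

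I expect essentially no serious obstacle here; the only point requiring care is the identification of the extremal softmax configurations that produce $M^{K}_{\tau}$ and $N^{K}_{\tau}$, and the cleanest route is to borrow Proposition~\ref{prop:bound} rather than re-deriving the extremum by hand. A secondary subtlety worth flagging is that the Lipschitz constant $L$ and the anchor value $\phi(M^{K}_{\tau})$ must be taken over precisely the interval $[M^{K}_{\tau}, N^{K}_{\tau}]$, which is exactly where clipping guarantees the softmax score lives; anchoring at $N^{K}_{\tau}$ instead would give the symmetric bound $L(N^{K}_{\tau}-M^{K}_{\tau}) + |\phi(N^{K}_{\tau})|$, so the choice of anchor point is a free modeling decision rather than a genuine constraint.
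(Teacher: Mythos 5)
Your proposal is correct and follows essentially the same route as the paper's own proof: both pin the clipped softmax score of the true class into the interval $[M^{K}_{\tau}, N^{K}_{\tau}]$ using $-\tau \leq \operatorname{clip}_{\tau}(\boldsymbol{z})_j \leq \tau$, then apply the Lipschitz hypothesis with the triangle inequality anchored at $M^{K}_{\tau}$. Your added remarks (deriving the interval by exponentiating Proposition~\ref{prop:bound}, extending to general $p$ via $\|\boldsymbol{z}\|_{\infty} \leq \|\boldsymbol{z}\|_{p}$, and noting the anchor point is a free choice) are sound but not needed beyond what the paper does.
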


The proofs of Proposition~\ref{prop:lip_condition} are provided in Appendix~\ref{app:proofs_4}. From Proposition~\ref{prop:lip_condition}, we show that composite losses equipped with LogitClip are bounded if their base losses are locally Lipschitz continuous \citep{sohrab2003basic} on the domain, which depends on $\tau$ and class number $K$. If $\tau \to \infty$, the domain turns to be $0 \leq x \leq 1$. In this case, CE and Focal loss \citep{lin2017focal} do not satisfy the Lipschitz condition and thus are unbounded. Otherwise, with a constant $\tau < \infty$, CE and Focal loss are locally Lipschitz continuous on the domain so that they can be bounded with our LogitClip. Similarly, this conclusion is also applicable to existing robust losses, which generally satisfy the Lipschitz condition. Based on the loss bound in Proposition~\ref{prop:lip_condition}, we can easily derive the bounds of $\mathcal{R}_{\mathcal{L}^{\tau}_{\phi}}(\tilde{f}) - \mathcal{R}_{\mathcal{L}^{\tau}_{\phi}}(f^{\star})$ under symmetric and asymmetric label noise, following our proofs for Theorem \ref{tm:sym_robust} and Theorem \ref{tm:asym_robust}. Overall, the above analysis provably shows that LogitClip can enable the resulting composite loss to be noise-tolerant, with the locally Lipschitz condition for the base loss. We will further verify our analysis experimentally in Section~\ref{sec:experiments}.

\begin{table*}[!t]
\centering
\caption{Average test accuracy (\%) with standard deviation on CIFAR-10 under various types of noisy labels (over 5 runs). The bold indicates the improved results by integrating LogitClip (LC). }
\label{tab:cifar10_results}
\renewcommand\arraystretch{0.85}
\resizebox{0.95\textwidth}{!}{
\setlength{\tabcolsep}{4mm}{
\begin{tabular}{cccccc}
\toprule
Method & Sym-20\% & Sym-50\% & Asymmetric & Dependent & Real  \\
\midrule
 CE& 86.73$\pm$0.72  &  70.88$\pm$0.46 & 78.34$\pm$0.54 & 68.26$\pm$0.21 & 72.85$\pm$0.32\\
\textbf{+ LC (Ours)}& \textbf{91.62$\pm$0.16} & \textbf{84.37$\pm$0.34} & \textbf{86.91$\pm$0.68} & \textbf{86.74$\pm$0.55} & \textbf{82.06$\pm$0.70}\\
\midrule
Focal & 87.17$\pm$0.68  &  70.61$\pm$0.59 & 79.61$\pm$0.40 & 69.40$\pm$0.55 & 72.29$\pm$0.41\\
\textbf{+ LC (Ours)}& \textbf{91.91$\pm$0.46} &  \textbf{84.65$\pm$0.74} & \textbf{85.42$\pm$0.80} & \textbf{87.02$\pm$0.22} & \textbf{81.78$\pm$0.37}\\
\midrule
MAE & 88.92$\pm$0.36  &  75.73$\pm$1.15 & 56.74$\pm$0.71 & 53.92$\pm$1.07 & 53.26$\pm$0.67\\
\textbf{+ LC (Ours)}& \textbf{90.84$\pm$0.20} &  \textbf{86.06$\pm$0.52} & \textbf{83.74$\pm$0.46} & \textbf{87.76$\pm$0.79} & \textbf{82.83$\pm$0.38}\\
\midrule
PHuber-CE& 90.92$\pm$0.93  &  74.07$\pm$0.41 & 81.26$\pm$0.65 & 75.07$\pm$0.26 & 76.61$\pm$0.58\\
\textbf{+ LC (Ours)}& \textbf{91.90$\pm$0.65} &  \textbf{84.64$\pm$0.19} & \textbf{85.54$\pm$0.38} & \textbf{86.96$\pm$0.72} & \textbf{82.41$\pm$0.82}\\
\midrule
SCE& 91.48$\pm$0.78  &  85.38$\pm$0.47 & 78.65$\pm$0.60 & 87.05$\pm$0.58 & 81.65$\pm$0.35\\
\textbf{+ LC (Ours)}& 91.56$\pm$0.14 & \textbf{86.18$\pm$0.35} & \textbf{84.47$\pm$1.04} & \textbf{87.61$\pm$0.19} & \textbf{82.43$\pm$0.37}\\
\midrule
GCE& 90.58$\pm$0.66  &  85.51$\pm$0.45 & 79.35$\pm$0.58 & 87.64$\pm$0.62 & 81.38$\pm$0.70\\
\textbf{+ LC (Ours)}& \textbf{91.21$\pm$0.52} & 85.90$\pm$0.15 & \textbf{84.24$\pm$0.84} & 87.69$\pm$0.21 & \textbf{82.44$\pm$0.11}\\
\midrule
Taylor-CE & 90.44$\pm$0.40  &  85.71$\pm$0.26 & 80.92$\pm$1.37 & 87.20$\pm$0.98 & 82.32$\pm$1.12\\
\textbf{+ LC (Ours)}& \textbf{91.37$\pm$0.30} &  \textbf{86.31$\pm$0.18} & \textbf{84.57$\pm$0.45} & \textbf{88.05$\pm$0.72} & \textbf{82.86$\pm$0.55}\\
\midrule
NCE & 90.87$\pm$0.94  &  68.45$\pm$0.43 & 83.68$\pm$0.85 & 73.53$\pm$0.93 & 79.96$\pm$0.25\\
\textbf{+ LC (Ours)}& \textbf{91.70$\pm$0.27} & \textbf{85.88$\pm$0.89} & \textbf{88.44$\pm$0.53} & \textbf{87.59$\pm$1.21} & \textbf{82.11$\pm$0.64}\\
\midrule
AEL & 88.59$\pm$1.03  &  77.48$\pm$0.88 & 60.90$\pm$1.27 & 84.55$\pm$0.71 & 69.40$\pm$0.38\\
\textbf{+ LC (Ours)}& \textbf{90.58$\pm$0.36} & \textbf{86.07$\pm$0.43} & \textbf{82.12$\pm$0.89} & \textbf{87.77$\pm$0.48} & \textbf{82.17$\pm$0.71}\\
\midrule
AUL & 76.73$\pm$0.33  &  75.27$\pm$0.93 & 59.80$\pm$0.75 & 73.66$\pm$0.45 & 63.96$\pm$1.15\\
\textbf{+ LC (Ours)}& \textbf{91.31$\pm$0.85} &  \textbf{85.98$\pm$0.74} & \textbf{84.38$\pm$0.53} & \textbf{87.93$\pm$0.86} & \textbf{82.46$\pm$0.66}\\
\midrule
Cores & 91.56$\pm$0.23  &  85.32$\pm$0.36 & 85.30$\pm$0.63 & 86.65$\pm$0.83 & 82.28$\pm$0.28\\
\textbf{+ LC (Ours)}& 91.72$\pm$0.15 &  \textbf{86.06$\pm$0.28} & \textbf{86.36$\pm$0.33} & \textbf{87.75$\pm$0.09} & \textbf{82.83$\pm$0.40}\\
\bottomrule
\end{tabular}
}
}

\end{table*}

\section{Experiments}
\label{sec:experiments}
In this section, we validate the effectiveness of our method on three benchmarks, including simulated and real-world datasets under various types of label noise. We show that LogitClip not only significantly improves the robustness of CE loss, but also broadly enhances the performance of popular robust losses. In addition, we perform a sensitivity analysis to validate the effect of $\tau$.

\subsection{Setups}
\paragraph{Datasets.} To verify the efficacy of LogitClip, we comprehensively consider four different types of label noise, including \textbf{(1)} symmetric noise, \textbf{(2)} asymmetric noise \citep{zhang2018generalized}, \textbf{(3)} instance-dependent noise \citep{chen2020beyond}, and \textbf{(4)} real-world noise on CIFAR-10/100 \citep{krizhevsky2009learning} and WebVision \citep{li2017webvision} datasets. For symmetric noise, each label can be flipped to any other class with the same probability. In our experiments, we uniformly flip the label to other classes with a probability of 20\% and 50\%, respectively. For asymmetric noise, the labels might be only flipped to similar classes \citep{patrini2017making, zhang2018generalized}. In our CIFAR-10 experiments, we generate asymmetric noisy labels by mapping \textsc{truck} $\rightarrow$ \textsc{automobile}, \textsc{bird} $\rightarrow$ \textsc{airplane}, \textsc{deer} $\rightarrow$ \textsc{horse}, and \textsc{cat} $\leftrightarrow$ \textsc{dog} with probability 40\%. For CIFAR-100, we flip each class into the next circularly with a probability of 40\%. For instance-dependent noise, we assume the mislabeling probability of each instance is dependent on the corresponding input features \citep{chen2020beyond, xia2020parts}. In the experiments, we use the instance-dependent noise from PDN \citep{xia2020parts} with a noisy rate of 40\%, where the noise is synthesized based on the DNN prediction error. For real-world noisy labels on the CIFAR datasets, we use the ``Worst" label set of CIFAR-10N and the ``Fine" label set of CIFAR-100N \citep{wei2022learning}, respectively.


\paragraph{Training details.} We perform training with WRN-40-2 \citep{zagoruyko2016wide} on CIFAR-10 and CIFAR-100. In particular, we train the network for 200 epochs using SGD with a momentum of 0.9, a weight decay of 0.0005, and a batch size of 128. We set the initial learning rate as 0.1, and reduce it by a factor of 10 after 80 and 140 epochs. For our LogitClip in all experiments, we set $\delta = 1/\tau$ (see Equation~\ref{eq:clipdiff}) and use Euclidean norm, i.e., $p=2$. We use 5k noisy samples as the validation dataset to tune the hyperparameter $1/\tau$ in $\{0.1, 0.5, 1, 1.5, \ldots, 4.5, 5\}$, then train the model on the full training set and report the average test accuracy in the last 10 epochs. We repeat all experiments 5 times with different random seeds. More training details are described in Appendix~\ref{app:exp_setup}.

\subsection{CIFAR-10 and CIFAR-100}

On CIFAR-10 and CIFAR-100, we validate that LogitClip can enhance the noise robustness of existing loss functions. In particular, we consider the following loss functions: \textbf{(1)} Cross-Entropy (CE), which is the most commonly used classification loss. \textbf{(2)} Focal loss, which is originally proposed for dense object detection and also an unbounded classification loss function, $\mathcal{L}_\mathrm{Focal}\left(\boldsymbol{p}, y\right) =  -\sum_{j=1}^{K} \boldsymbol{y}_{j} (1-\boldsymbol{p}_{j})^{\gamma} \log (\boldsymbol{p}_{j})$. We set $\gamma=0.5$ in our experiments. \textbf{(3)} Mean absolute error (MAE) \citep{ghosh2017robust}, a symmetric loss function  $\mathcal{L}_\mathrm{MAE}\left(\boldsymbol{p}, y\right) = \left\|\boldsymbol{y}_{j}-\boldsymbol{p}_j\right\|_{1}$ that has been demonstrated to be robust to label noise. \textbf{(4)} PHuber-CE \citep{menon2020can}, a loss variant of gradient clipping for learning with noisy labels. \textbf{(5)} SCE \citep{wang2019symmetric}, which boosts CE symmetrically with a noise-robust counterpart Reverse Cross Entropy (RCE). \textbf{(6)} GCE \citep{zhang2018generalized}, a bounded loss function that uses a hyperparameter $q$ to balance between MAE and CE. Following the recommended setting in the corresponding paper, we set the hyperparameter $q$ as 0.7. \textbf{(7)} Taylor-CE \citep{feng2020can}, which controls the order of the Taylor Series to balance between MAE and CE. \textbf{(8)} NCE \citep{ma2020normalized}, which employs loss normalization to boost the robustness of CE loss. \textbf{(9)} AEL, AUL, and AGCE \citep{zhou2021asymmetric}, which are asymmetric loss functions. \textbf{(10)} Cores \citep{cheng2020learning}, a robust loss that is guaranteed to be robust to instance-dependent label noise. We also consider the Active Passive Loss \citep{ma2020normalized} by including NCE+MAE and NCE+AGCE.

\paragraph{Can LogitClip improve the noise-robustness of existing loss functions?} Table \ref{tab:cifar10_results} and Table \ref{tab:cifar100_results} present the average test accuracy of models trained with different noise-robust loss functions on CIFAR-10 and CIFAR-100, under various types of noisy labels. A salient observation is that our method drastically improves the noise-robustness performance of CE by employing LogitClip. For example, on the CIFAR-10 with instance dependent label noise, our approach improves the test accuracy of CE loss from 68.36\% to 86.60\% -- a \textbf{18.24}\% of direct improvement. On CIFAR-100, our method also improves performance by a significant margin. More importantly, we show that the LogitClip can boost performance for a wide range of loss functions, including non-robust and robust losses. For example, we observe that, on the CIFAR-10 with asymmetric label noise, the test accuracy of the NCE loss is improved to 88.44\% when employing LogitClip, establishing strong robustness against all types of label noise. In addition to loss functions, we show our method can also enhance other deep learning methods in Appendices~\ref{app:dividemix} and \ref{app:sota}.

\begin{table*}[!t]
\centering
\caption{Average test accuracy (\%) with standard deviation on CIFAR-100 under various types of noisy labels (over 5 runs). The bold indicates the improved results by integrating our method. }
\label{tab:cifar100_results}
\renewcommand\arraystretch{0.85}
\resizebox{0.95\textwidth}{!}{
\setlength{\tabcolsep}{4mm}{
\begin{tabular}{cccccc}
\toprule
Method & Sym-20\% & Sym-50\% & Asymmetric & Dependent & Real  \\
\midrule
 CE& 64.81$\pm$1.10 &  47.07$\pm$1.07 & 47.68$\pm$0.93 & 52.49$\pm$0.79 & 55.68$\pm$0.81\\
\textbf{+ LC (Ours)}& \textbf{71.59$\pm$0.76} &  \textbf{63.16$\pm$0.74} & \textbf{59.04$\pm$0.18} & \textbf{66.24$\pm$0.71} & \textbf{58.61$\pm$0.35}\\
\midrule
Focal & 64.76$\pm$0.14  &  47.06$\pm$0.56 & 48.59$\pm$0.73 & 52.87$\pm$0.57 & 55.01$\pm$0.65\\
\textbf{+ LC (Ours)}& \textbf{71.39$\pm$0.79} &  \textbf{62.91$\pm$0.25} & \textbf{59.53$\pm$0.76} & \textbf{66.38$\pm$0.30} & \textbf{58.76$\pm$0.23}\\
\midrule
PHuber-CE& 71.47$\pm$0.29 &  60.52$\pm$0.67 & 47.26$\pm$0.44 & 64.33$\pm$0.41 & 56.18$\pm$0.59\\
\textbf{+ LC (Ours)}& 71.89$\pm$0.31 &  \textbf{61.46$\pm$0.75} & \textbf{53.95$\pm$0.38} & \textbf{65.08$\pm$0.17} & \textbf{58.64$\pm$0.49} \\
\midrule
SCE&  70.11$\pm$0.31 &  58.56$\pm$0.78 & 44.91$\pm$0.62 & 62.86$\pm$0.74 & 58.27$\pm$0.88\\
\textbf{+ LC (Ours)}& \textbf{71.19$\pm$0.23} &  \textbf{60.11$\pm$0.15}  & \textbf{58.67$\pm$0.84}  & \textbf{64.76$\pm$0.28}  & \textbf{59.23$\pm$0.69}  \\
\midrule
GCE& 63.30$\pm$0.48 &  9.10$\pm$0.72 & 40.40$\pm$0.45 & 27.45$\pm$0.50 & 49.54$\pm$0.58\\
\textbf{+ LC (Ours)}& \textbf{70.22$\pm$0.52} &  \textbf{62.14$\pm$1.20} & \textbf{54.41$\pm$0.64} & \textbf{66.25$\pm$0.85} & \textbf{58.77$\pm$0.76}\\
\midrule
Cores & 69.97$\pm$0.56 &  55.37$\pm$0.84 & 50.24$\pm$0.38 & 59.85$\pm$0.61 & 56.49$\pm$0.53\\
\textbf{+ LC (Ours)}& \textbf{71.67$\pm$0.26} &  \textbf{62.67$\pm$0.33} & \textbf{63.32$\pm$0.70} & \textbf{66.31$\pm$0.25} & \textbf{59.23$\pm$0.34}\\
\midrule
NCE+MAE & 70.55$\pm$0.83 &  61.01$\pm$0.94 & 53.68$\pm$0.18 & 65.02$\pm$0.42 & 59.27$\pm$0.12\\
\textbf{+ LC (Ours)}& \textbf{71.59$\pm$0.65} &  \textbf{62.85$\pm$0.52} & \textbf{54.51$\pm$0.73} & \textbf{66.58$\pm$0.18} & \textbf{60.08$\pm$0.34}\\
\midrule
NCE+AGCE & 69.69$\pm$0.30 &  58.13$\pm$0.43 & 58.17$\pm$0.25 & 64.35$\pm$0.39 & 58.64$\pm$0.65\\
\textbf{+ LC (Ours)}& \textbf{71.30$\pm$0.48} & \textbf{63.55$\pm$0.45} & \textbf{59.27$\pm$0.32} & \textbf{65.51$\pm$0.47} & \textbf{59.57$\pm$0.55}\\

\bottomrule

\end{tabular}
}
}
\end{table*}

\begin{figure*}[!t]
    \centering
    \begin{subfigure}[b]{0.48\textwidth}
        \centering
        \includegraphics[height=4.0cm,width=6cm]{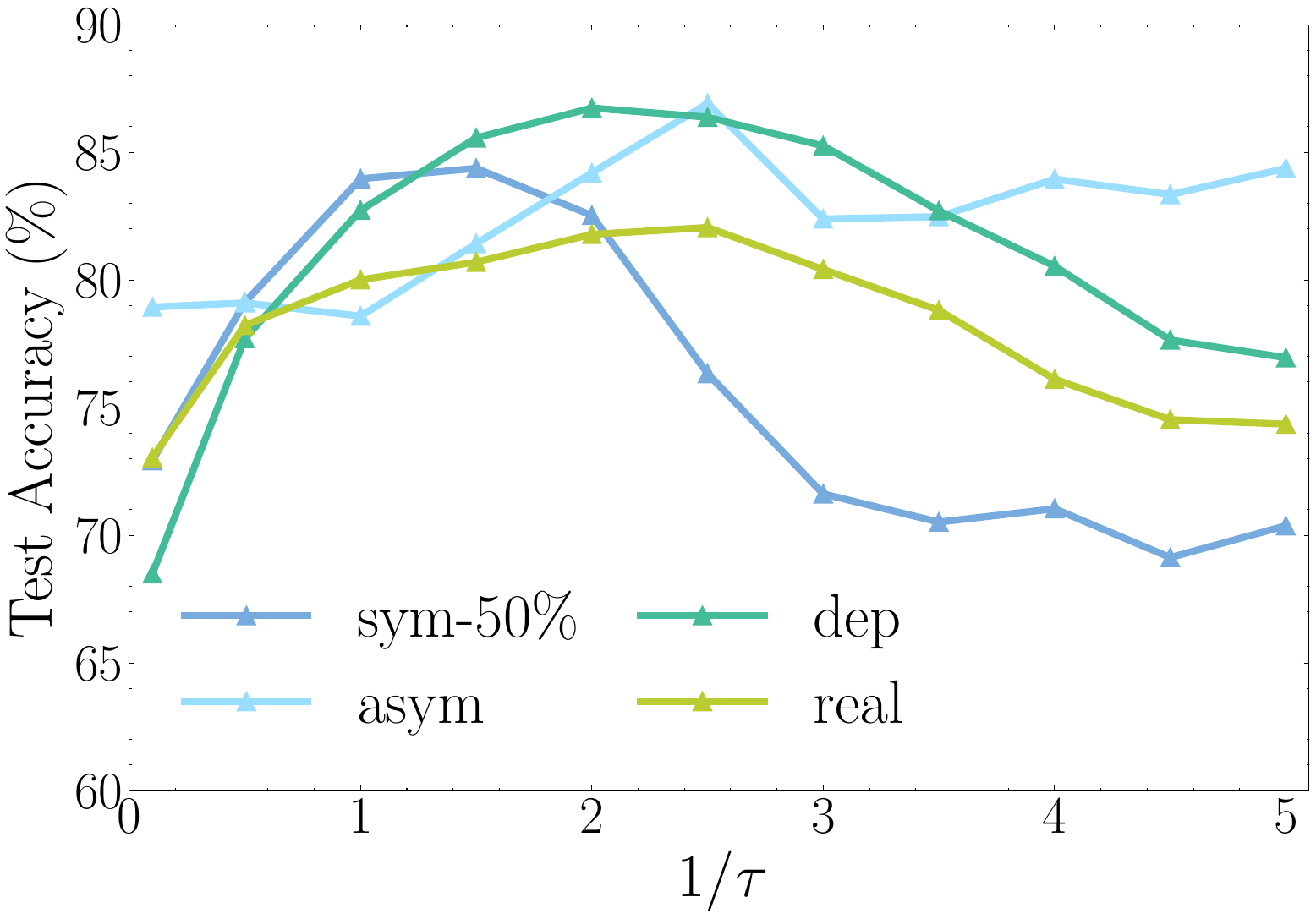}
        \caption{CIFAR-10}
        \label{fig:cifar10_tau}
    \end{subfigure}
    \begin{subfigure}[b]{0.48\textwidth}
        \centering
        \includegraphics[height=4.0cm,width=6cm]{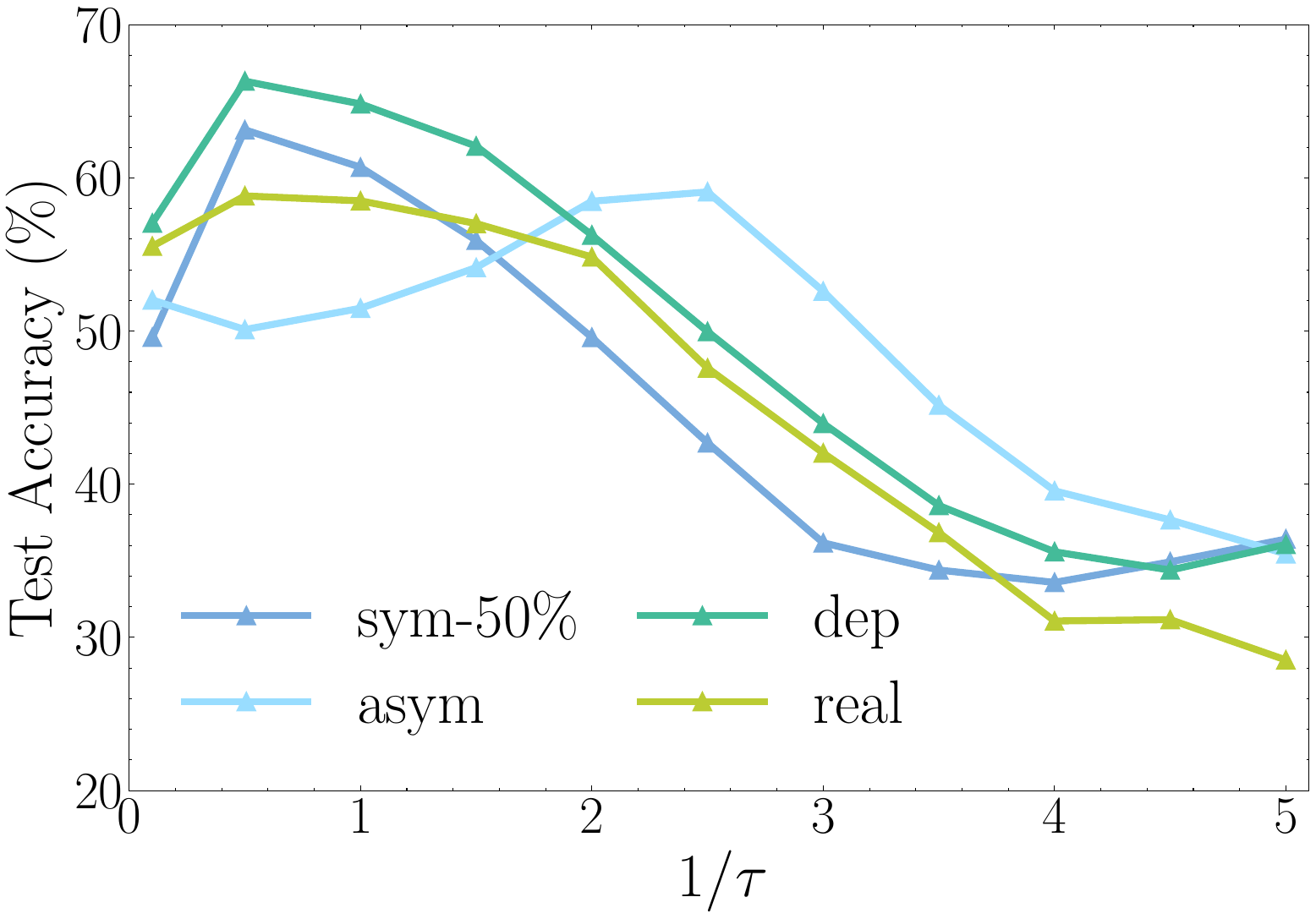}
        \caption{CIFAR-100}
        \label{fig:cifar100_tau}
    \end{subfigure}
     \caption{The Effect of $\tau$ in LogitClip with CIFAR-10 and CIFAR-100 across various noise types.}
     \label{fig:pdf}
    \vspace{-15pt}
\end{figure*}


\paragraph{How does the logit norm threshold $\tau$ affect the noise-robustness of LogitClip?} In Figure~\ref{fig:cifar10_tau} and Figure~\ref{fig:cifar100_tau}, we  ablate how the parameter $\tau$ in our method (\emph{cf.} Eq.~\ref{eq:logitclip}) affects the noise robustness performance. The analysis is based on CIFAR-10 and CIFAR-100 with four types of noisy labels, including symmetric-50$\%$, asymmetric, dependent, and real-world noisy labels. Our results echo the analysis in Proposition~\ref{prop:bound} and Theorem~\ref{tm:sym_robust}, where a smaller $\tau$ would lead to a tighter bound on the difference of the risks between using noisy and clean labels. On the other hand, too small of $\tau$ causes a large lower bound on the loss, which is less desirable from the optimization perspective.  
In Appendix~\ref{app:clean_exp}, We clearly validate the underfitting issue caused by a small $\tau$ with experiments on a clean dataset.

\paragraph{Is LogitClip effective with different architectures?}
To show our proposed method is model-agnostic, we conduct  experiments on a diverse collection of model architectures and present the results in Table \ref{tab:arch}. From the results, we observe that LogitClip consistently improves the test performance on CIFAR-10 when using SqueezeNet~\citep{iandola2016squeezenet}, ResNet~\citep{he2016deep}, DenseNet~\citep{huang2017densely} architectures. For instance, with DenseNet, using LogitClip boosts the test accuracy of CE from 59.34\% to 81.29\%, a \textbf{21.99\%} of direct improvement on CIFAR-10 with Symmetric-50\% noisy labels.

\begin{table*}[h]
\centering
\caption{Average test performance comparison on noisy CIFAR-10 with different network architectures: SqueezeNet~\citep{iandola2016squeezenet}, ResNet~\citep{he2016deep}, DenseNet~\citep{huang2017densely}. All values are percentages. The results are shown as CE / +LC (ours).}
\label{tab:arch}
\renewcommand\arraystretch{0.85}
\resizebox{\textwidth}{!}{
\setlength{\tabcolsep}{3mm}{
\begin{tabular}{c|ccccc}
\toprule
 Architecture  & Sym-20\% & Sym-50\%  & Asymmetric & Dependent & Real\\
\midrule
 SqueezeNet   & 80.77 / \textbf{81.05} & 54.73 / \textbf{71.64}  &  75.67 / \textbf{78.98}  &  75.43 / \textbf{76.74}  & 61.28 / \textbf{75.38}  \\
 ResNet-34  & 74.16 / \textbf{84.88} & 54.48 / \textbf{74.81}  &  73.94 / \textbf{79.69}  &  58.03 / \textbf{76.51}  & 63.24 / \textbf{75.25}  \\
 DenseNet   & 80.40 / \textbf{90.85} & 59.34 / \textbf{81.29}  &  76.35 / \textbf{84.35}  &  62.27 / \textbf{84.20}  & 63.13 / \textbf{78.93}  \\
\bottomrule
\end{tabular}
}
}
\end{table*}

\begin{table*}[!t]
\footnotesize
\centering
\renewcommand\arraystretch{1}
\caption{ Top-1 validation accuracy (\%) on the clean ILSVRC12 validation set of ResNet-18 models trained on WebVision using different loss functions, under the Mini setting \cite{jiang2018mentornet}. The bold indicates the best results. Here, ``Ours" denotes CE equipped with LogitClip and ``Ours+" denotes NCE+AGCE (latest state-of-the-art \citep{zhou2021asymmetric}) equipped with LogitClip.} 
\label{tab:webvision}
\resizebox{0.85\textwidth}{!}{
\setlength{\tabcolsep}{2mm}{
\begin{tabular}{ccccccccc}
\toprule
 Method &  CE &  PHuber-CE & GCE & SCE & NCE+MAE &  NCE+AGCE & \textbf{Ours} & \textbf{Ours+}\\
\midrule
 \textit{best} &62.6&  61.6 & 57.32&  59.52 & 64.08&  63.80 & \textbf{65.12} & 64.92\\
\midrule
 \textit{last} & 60.84 &  59.76 & 53.26& 58.47 & 62.85 &  62.46 & 63.75 & \textbf{64.50}\\
\bottomrule
\vspace{-20pt}

\end{tabular}
}}
\end{table*}

\subsection{WebVision}
Going beyond CIFAR benchmarks, we verify the effectiveness of LogitClip on a large-scale real-world noisy dataset -- WebVision \citep{li2017webvision}. In the WebVision dataset, there are 2.4 million images with real-world noisy labels, crawled from the web (e.g., Flickr and Google) based upon the 1,000 classes of ImageNet ILSVRC12 \citep{deng2009imagenet}. Following the ``Mini" setting used in previous works \citep{jiang2018mentornet, ma2020normalized, zhou2021asymmetric}, we take the first 50 classes of the Google resized image subset. For evaluation, we test the trained networks on the same 50 classes of the ILSVRC12 validation set, which can be seen as a clean validation. For each loss, we train a ResNet-18 network using SGD for 120 epochs with an initial learning rate of 0.1, Nesterov momentum 0.9, weight decay $5 \times 10^{-4}$, and batch size 128. The learning rate is reduced by a factor of 10 after 40 and 80 epochs. We resize the images to $224 \times 224$ and apply the standard data augmentations, including random cropping and random horizontal flip. As shown in Table \ref{tab:webvision}, \textit{best} denotes the score of the epoch where the validation accuracy is optimal, and \textit{last} denotes the scores at the average accuracy in the last 10 epochs. As shown in the table, LogitClip not only outperforms but also enhances existing loss functions by a meaningful margin. The results verify that our method is effective for improving noise-robustness in large-scale real-world scenarios.

\section{Discussion}
\label{sec:discussion}

\paragraph{Relations to existing clipping methods.} In the literature, clipping-based methods have been studied in the context of deep learning \citep{bengio1994learning, zhang2020gradient, abadi2016deep, howard2017mobilenets, sun2021react}. One of the most classic clipping-based methods is gradient clipping, a widely used technique in recurrent neural networks \citep{bengio1994learning}, optimization \citep{hazan2015beyond, levy2016power, zhang2020gradient}, and privacy \citep{abadi2016deep, pichapati2019adaclip}. In the simplest form, gradient clipping is designed to constrain the global parameter gradient norm at a specified threshold. With a loss function $\ell_{\theta}$, the \textit{clipped} gradient with a user-specified threshold $\tau > 0$ can be computed as: 
$$
 \bar{g}_\tau(\theta) \doteq \operatorname{clip}_\tau(g(\theta)), \  \operatorname{clip}_\tau(w) \doteq \begin{cases} \frac{\tau \cdot w}{\|w\|_2} & \text { if }\|w\|_2 \geq \tau \\ w & \text { else }\end{cases},
$$
where $g(\theta)$ denotes the gradient for a mini-batch: $g(\theta) \doteq \frac{1}{b} \sum_{n=1}^b \nabla \ell_\theta\left(x_n, y_n\right)$. Different from gradient clipping which limits the norm of the parameter gradient, our LogitClip method places the constraint directly on the \emph{model output}, i.e., the logit vector. Our method is thus designed to have a \emph{direct and explicit} effect in bounding the loss (\emph{cf.} Theorem~\ref{tm:sym_robust} and Theorem~\ref{tm:asym_robust}) and preventing overfitting to examples with noisy labels.  

 Indeed, recent work \citep{menon2020can} has shown that gradient clipping alone does not endow label noise robustness to neural networks. They instead proposed a noise-robust variant, composite loss-based gradient clipping and the resulting partially Huberised loss (PHuber-CE). The results in Section \ref{sec:experiments} have shown that LogitClip not only outperforms but also enhances the performance of PHuber-CE loss. Our results overall demonstrate the superiority and complementarity of LogitClip to gradient clipping on noise robustness.
 
 ReLU6 \citep{howard2017mobilenets} is a modification of the rectified linear unit (ReLU) to facilitate the learning of sparse features. In particular, ReLU6 clamps the activation of the intermediate layers to a maximum value of 6, $\operatorname{ReLU6}(\boldsymbol{x})=\min (\max (0, \boldsymbol{x}), 6)$. Although ReLU6 provides a constraint on the outputs of the intermediate layers, the resulting loss is unbounded yet since the final output can be multiplied through the last linear layer. The empirical results in Figure~\ref{fig:relu6} show that ReLU6 cannot enhance the robustness to noisy labels while our method (LC-N) achieves a significant improvement. 

\paragraph{Clipping-by-value vs. Clipping-by-norm.} While our logit clipping has demonstrated strong promise in the manner of Clipping-by-norm, one may also ask: can a similar effect be achieved by clipping the logit vector by value? In this ablation, we show that directly constraining the maximum and minimum values of the logit vector does not work well as our method. In particular, we consider the link function as softmax with Clipping-by-value:
\begin{align*}
\label{eq:clip_value}
\bar{\sigma}^{\prime}_{\lambda}(\boldsymbol{z}) \doteq \sigma(\operatorname{clip}^{\prime}_{\lambda}(\boldsymbol{z})) \quad \operatorname{clip}^{\prime}_{\lambda}(\boldsymbol{z}_{j}) \doteq \begin{cases}\lambda  & \text { if }\boldsymbol{z}_{j} \geq \lambda \\
-\lambda  & \text { if }\boldsymbol{z}_{j} \leq -\lambda \\
\boldsymbol{z}_{j} & \text { else }\end{cases},
\end{align*}

where $\lambda$ denotes the constant threshold. For convenience, we set the maximum and minimum values as $\lambda$ and $-\lambda$, respectively. We search the best $\lambda$ in $\{0.1, 0.5, 1, 1.5, \ldots , 4.5, 5\}$.

Figure~\ref{fig:relu6} presents the performance comparison between our method and the variant of Clipping-by-value, denoted as LC-N and LC-V, respectively. While both the two logit clipping methods improve the robustness of CE against noisy labels, LC-V obtains inferior performance compared to our proposed method, and the gaps become remarkably significant under complex noise settings, \emph{e.g.}, instance-dependent and real-world label noise. From a theoretical perspective, CE with LC-V also satisfies the bound in Proposition~\ref{prop:bound} and is hence applicable for the two bounds in Theorem~\ref{tm:sym_robust} and Theorem~\ref{tm:asym_robust}, which indicates the noise-robustness of this method. Nevertheless, LC-V is suboptimal as the clipping operation may diminish the gradients on the clipped components of the logit vector. Besides, LC-V would also modify the direction of the input vector and even change the final prediction. Overall, we demonstrate that our method is superior to the variant of clipping-by-value.

\paragraph{Relations to LogitNorm.}
A concurrent work \citep{wei2022mitigating} employs logit normalization (LogitNorm) to improve the OOD detection and calibration performance. For all training inputs, the logit vector is normalized to be a unit vector with a
constant norm and the resulting loss is defined as: $
\label{eq:norm_loss}
\mathcal{L}_{\text{logit\_norm}}({f}(\boldsymbol{x};\theta),y) = - \log \frac{e^{{f}_y/(\tau \|\boldsymbol{f}\|)}}{\sum^{k}_{i=1} e^{f_{i}/(\tau \|\boldsymbol{f}\|)}}.
$
Our work bears three critical differences, in terms of the problem setting, methodology, and theory. 

\textbf{(1)} \emph{Problem setting}: LogitNorm focuses on improving the performance of detecting out-of-distribution (OOD) examples during the test time, while our work aims to enhance the robustness against noisy labels in the training stage. The learning tasks are fundamentally different. 

\textbf{(2)} \emph{Methodology}: We propose to clamp the logit vector to ensure it is \emph{upper bounded by a constant}, while LogitNorm enforces the norm of logit vectors to be an \emph{exact constant} for all samples. From a constrained optimization perspective, LogitNorm enforces \emph{equality constraint} on the $L_2$ norm of logit vector, whereas LogitClip enforces \emph{inequality constraint}. 
Hence, LogitClip enforces a less strict objective than LogitNorm. 
Referring to the relationship between Gradient Clipping \citep{abadi2016deep, zhang2020gradient} and Gradient Normalization (NGD) \citep{hazan2015beyond, murray2019revisiting}, LogitClip is a unique method that differs from LogitNorm. 

In Table \ref{tab:norm_comp}, we present the performance comparison of LogitClip and LogitNorm in learning with noisy labels. From the comparison, we find that LogitClip is superior to the LogitNorm in this task, especially in those complicated settings. For example, in the asymmetric setting, LogitClip outperforms the LogitNorm method by a large margin of 5.1\%. Intuitively, LogitNorm can improve the robustness to label noise because it also induces a loss bound. However, it enforces a more strict constraint on all training examples, which may make LogitNorm suboptimal in this task. 

\begin{figure}[t]
    \centering
    \includegraphics[width=0.4\textwidth]{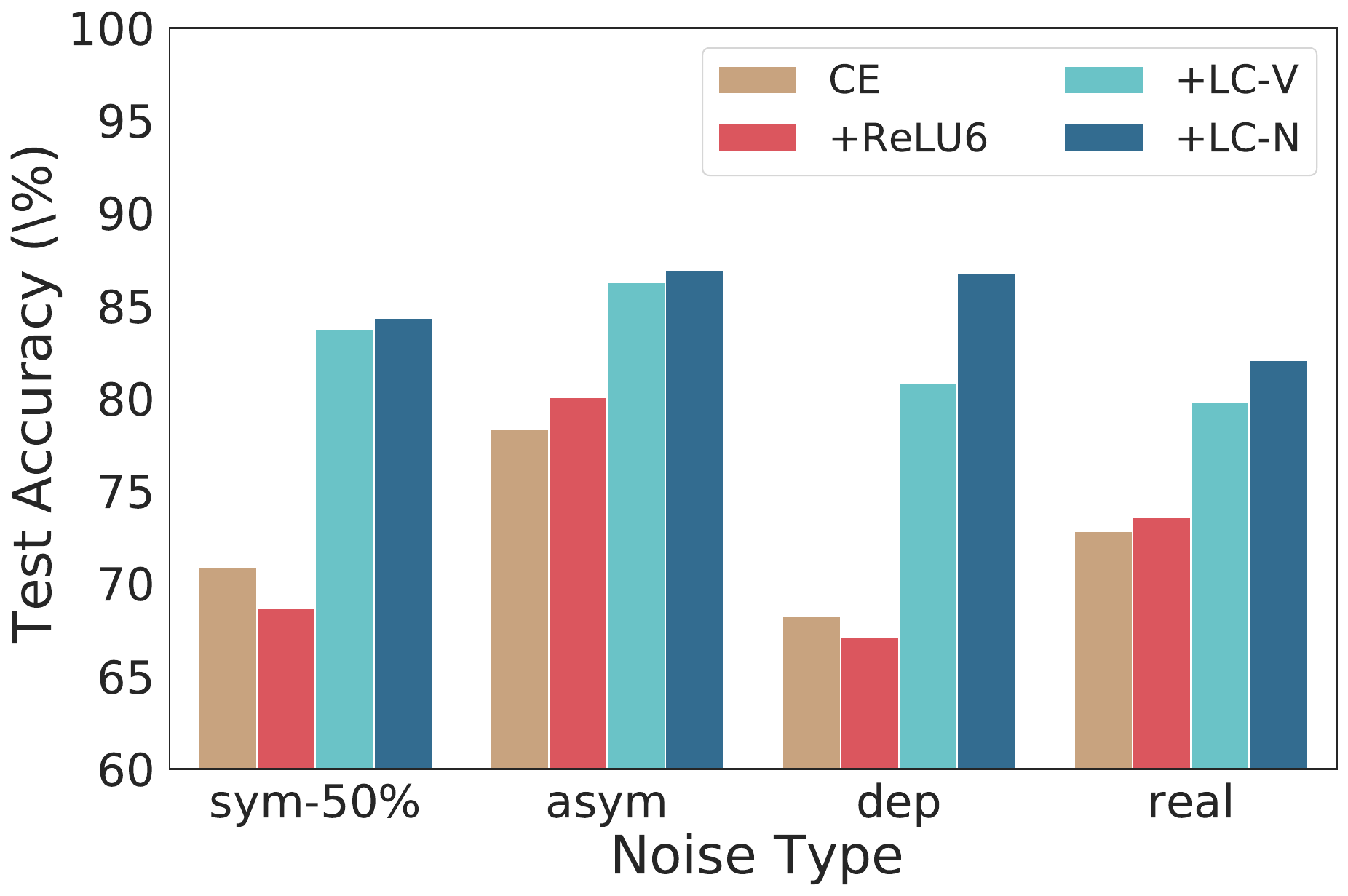}
    \caption{Performance comparison among ReLU6, LC-V, and our method (LC-N) on noisy CIFAR-10.}
    \label{fig:relu6}
\end{figure}

\begin{table}[t]
\footnotesize
\centering
\renewcommand\arraystretch{1.2}
\caption{Comparison between LogitClip and LogitNorm on the CIFAR-10 dataset with various noise settings.} 
\label{tab:norm_comp}
\resizebox{0.45\textwidth}{!}{
\setlength{\tabcolsep}{1mm}{
\begin{tabular}{ccccc}
\toprule
Method & Sym-50\% &	Asymmetric	& Dependent	& Real \\
\midrule
LogitNorm & 83.97 & 81.81 & 84.56 & 80.10 \\
LogitClip & 84.37 & \textbf{86.91} & \textbf{86.74} & \textbf{82.06} \\
\bottomrule
\end{tabular}
}}
\vspace{-10pt}
\end{table}

\textbf{(3)} \emph{Theoretical insight}: LogitNorm aims to decouple the influence of logits’ magnitude from network optimization, and they empirically show that their method leads to more meaningful information to differentiate in-distribution and OOD samples. In contrast, our LogitClip is designed to enforce an upper bound of the resulting loss, as shown in Proposition~\ref{prop:bound}. Furthermore, we provide a theoretical interpretation to further understand why LogitClip introduces the noise-tolerant ability and which kinds of loss functions LogitClip can work with. In summary, our analysis builds a connection between LogitClip and noise robustness, which is novel to the best of our knowledge.

\section{Related Work}
\label{app:relate}

\noindent\textbf{Robust loss functions.} Designing loss functions that are robust to noisy labels has attracted a surge of interest.  \citeauthor{ghosh2017robust} first shows that, for multi-class classification, loss functions that satisfy the symmetric condition, $\sum_{i=1}^{k} \mathcal{L}(f(\mathbf{x}), i)=C, \forall x \in \mathcal{X}, \forall f $, where $C$ is a constant, can be robust to label noise. One of the most classic symmetric loss functions is Mean Absolute Error (MAE): $\mathcal{L}_\text{MAE}\left(\boldsymbol{z}, y\right) = \left\|\boldsymbol{y}-\sigma(\boldsymbol{z})\right\|_{1}$, where $\sigma(\boldsymbol{z})$ denotes the softmax output. Besides, NCE \citep{ma2020normalized} makes any loss function to be symmetric by loss normalization. Despite its theoretical robustness, symmetric losses have been shown to exhibit extremely slow convergence on complicated datasets, since the symmetric condition is too stringent to find a convex loss function \citep{zhang2018generalized, ma2020normalized}.  To alleviate this issue, Generalized Cross Entropy (GCE) \citep{zhang2018generalized} uses a hyperparameter $q$ to balance between MAE and CE, adopting the negative Box-Cox transformation strategy. Taylor Cross Entropy (Taylor-CE) \citep{feng2020can} balances between MAE and CE by controlling the order of the Taylor Series. Partial Huberised Cross Entropy (PHuber-CE) \citep{menon2020can} enhances the noise robustness of CE with a loss variant of gradient clipping. Symmetric Cross Entropy (SCE) \citep{wang2019symmetric} boosts CE symmetrically with a noise-robust counterpart, Reverse Cross Entropy (RCE). Recent work \citep{zhou2021asymmetric} proposes a new family of robust loss functions, termed asymmetric loss functions, which provides a global clean weighted risk when minimizing the noisy risk for any hypothesis class. In this work, our focus is complementary to existing robust loss functions --- we propose a strategy that can universally enhance the noise robustness of existing losses, across various types of label noise.

\paragraph{Other deep learning methods for noise-robustness.} 
In addition to robust loss functions, some other solutions are also
applied to learn with noisy labels \cite{xia2019anchor, li2022estimating, li2022selective, wu2021class2simi, chen2023imprecise, shu2023cmw, shu2021learning, wu2021learning, ding2023improve,zhu2021clusterability, zhu2022beyond, zhu2022detecting, cheng2023mitigating, wei2022aggregate, liu2023humans, wei2022smooth, wei2021when, wei2022crowdsource}, including:
1) Some methods aim to design sample weighting schemes that give higher weights on clean samples \citep{jiang2018mentornet, liu2015classification, ren2018learning, shu2019meta, wei2020metainfonet}. 
2) Some methods propose to train on selected samples, using small-loss selection \citep{han2018co, wei2020combating, yu2019does, Xia2022SampleSW}, GMM distribution \citep{arazo2019unsupervised, li2020dividemix} or (dis)agreement between two models \citep{malach2017decoupling, wei2020combating, yu2019does}. 
3) Loss correction is also a popular direction based on an estimated noise transition matrix \citep{hendrycks2018using, patrini2017making}, or the model's predictions \citep{arazo2019unsupervised, chen2020beyond, reed2014training, tanaka2018joint, zheng2020error}. 
4) Some methods apply regularization techniques to improve generalization under the settings of label noise \citep{fatras2021wasserstein,hu2019simple, xia2020robust, liu2020peer, bai2021pes, liu2020early, zhu2021second, pmlr-v162-liu22w}, such as label smoothing \citep{lukasik2020does, szegedy2016rethinking}, temporal ensembling \citep{laine2016temporal}, and virtual adversarial training \citep{miyato2018virtual}. 
5) Some training strategies for combating noisy labels are built based upon semi-supervised learning methods \citep{li2020dividemix, nguyen2019self} or self-supervised learning \citep{li2022selective}. 
Compared to the above deep learning methods, designing robust loss function are generally a more straightforward and arguably more generic solution with theoretical guarantees.

\section{Conclusion}

In this paper, we propose Logit Clipping (\textbf{LogitClip}), a general strategy that can universally enhance the noise robustness of existing losses, across various types of label noise.
Specifically, we propose to clamp the norm of the logit vector to ensure that it is upper bounded by a constant. 
In this manner, CE loss equipped with our LogitClip method is effectively bounded, alleviating  overfitting to examples with noisy labels. 
As a result, our method could mitigate the undesirable influence of unbounded loss without modifying the loss function.
Moreover, we present theoretical analyses to certify the noise-tolerant ability of this method. Extensive experiments show that LogitClip not only significantly improves the noise robustness of CE loss, but also broadly enhances the generalization performance of popular robust losses.
This method is straightforward to implement with existing losses and can be easily adopted in various practical settings. We hope that our method inspires future theoretical research to explore robust loss from the logit perspective.

\section*{Acknowledgements}

This research is supported by the National Research Foundation, Singapore under its Industry Alignment Fund – Pre-positioning (IAF-PP) Funding Initiative. Lei Feng was supported by the National Natural Science Foundation of China (Grant No. 62106028), Chongqing Overseas Chinese Entrepreneurship and Innovation Support Program. Li is supported in part by the AFOSR Young Investigator Award under No. FA9550-23-1-0184; and faculty research awards/gifts from Google, Meta, and Amazon. We gratefully acknowledge the support of Center for Computational Science and Engineering at Southern University of Science and Technology for our research. Any opinions, findings, conclusions, or recommendations expressed in this material are those of the author(s) and do not reflect the views of the sponsors. 

\newpage

\bibliography{main}
\bibliographystyle{icml2023}

\newpage
\appendix
\onecolumn

\section{Proof of Proposition~\ref{prop:bound}}
\label{app:proofs_1}
\begin{proof}
Give a logit vector $z=f(\boldsymbol{x} ; {\boldsymbol\theta})$, for any class pair $j$ and $k$, we have:
\begin{align*}
    \boldsymbol{z}_{\min} - \boldsymbol{z}_{\max} \leq \boldsymbol{z}_{j} - \boldsymbol{z}_{k} \leq \boldsymbol{z}_{\max} - \boldsymbol{z}_{\min}
\end{align*}

Recall that $-\tau \leq \operatorname{clip}_{\tau}(\boldsymbol{z}_{j}) \leq \tau$, then:
\begin{align*}
    -2\tau \leq \boldsymbol{z}_{j} - \boldsymbol{z}_{k} \leq 2\tau
\end{align*}
Based on Equation~(\ref{eq:ce_logit}), we have:
\begin{align*}
    \log (1 + (K - 1) \cdot e^{-2\tau}) \leq \mathcal{L}^{\tau}_{\mathrm{CE}}\left(f(\boldsymbol{x} ; {\boldsymbol\theta}), y\right)\leq \log (1 + (K - 1) \cdot e^{2\tau}).
\end{align*}
Thus Proposition~\ref{prop:bound} is proved.
\end{proof}

\section{Proof of Theorem~\ref{tm:sym_robust}}
\label{app:proofs_2}

\begin{proof} Recall that for symmetric label noise with noise rate $\eta$, we have: $\eta_{jk} = 1-\eta$ for $j=k$, and $\eta_{jk} = \frac{\eta}{K-1}$. Then, for any model output $f(\boldsymbol{x})$,
\begin{align*}
\mathcal{R}_{\mathcal{L}^{\tau}_{\mathrm{CE}}}^\eta(f) = & \mathbb{E}_{(\boldsymbol{x}, y)\sim\mathcal{P}^\eta_\text{noisy}}\left[\mathcal{L}^{\tau}_{\mathrm{CE}}(f(\boldsymbol{x}), y)\right] \\
=& \mathbb{E}_{\mathcal{P}_{\boldsymbol{x}}}\mathbb{E}_{\mathcal{P}_{y^{\star} \mid \boldsymbol{x}}} \mathbb{E}_{\mathcal{P}_{y \mid y^{\star}}}\left[\mathcal{L}^{\tau}_{\mathrm{CE}}(f(\boldsymbol{x}), y)\right] \\
=& \mathbb{E}_{(\boldsymbol{x}, y^{\star})\sim \mathcal{P}_\text{clean}}\left[(1-\eta) \mathcal{L}^{\tau}_{\mathrm{CE}}(f(\boldsymbol{x}), y^{\star})+\frac{\eta}{K-1} \sum_{j \neq y^{\star}} \mathcal{L}^{\tau}_{\mathrm{CE}}(f(\boldsymbol{x}), j)\right] \\
=&(1-\eta) \mathcal{R}_{\mathcal{L}^{\tau}_{\mathrm{CE}}}(f)+\frac{\eta}{K-1}\left(\sum_{j=1}^K \mathcal{L}^{\tau}_{\mathrm{CE}}(f(\boldsymbol{x}), j)-\mathcal{R}_{\mathcal{L}^{\tau}_{\mathrm{CE}}}(f)\right) \\
=&\left(1-\frac{\eta K}{K-1}\right) \mathcal{R}_{\mathcal{L}^{\tau}_{\mathrm{CE}}}(f)+\frac{\eta}{K-1} \sum_{j=1}^K \mathcal{L}^{\tau}_{\mathrm{CE}}(f(\boldsymbol{x}), j) .
\end{align*}
From proposition \ref{prop:bound}, we have:
$$
K\log (1 + (K - 1) \cdot e^{-2\tau}) \leq \sum_{j}^{K}\mathcal{L}^{\tau}_{\mathrm{CE}}(f(\boldsymbol{x}), j) \leq K\log (1 + (K - 1) \cdot e^{2\tau}).
$$
Thus,
$$
\beta \mathcal{R}_{\mathcal{L}^{\tau}_{\mathrm{CE}}}(f) + \frac{\eta K}{K-1}\log(1+(K-1)e^{-2\tau}) \leq \mathcal{R}_{\mathcal{L}^{\tau}_{\mathrm{CE}}}^\eta(f) \leq 
\beta \mathcal{R}_{\mathcal{L}^{\tau}_{\mathrm{CE}}} + \frac{\eta K}{K-1}\log(1+(K-1)e^{2\tau}).
$$
where $\beta=(1-\frac{\eta K}{K-1})$. We can also write the inequality in terms of $ \mathcal{R}^{\eta}_{\mathcal{L}^{\tau}_{\mathrm{CE}}}(f)$:
$$
\frac{1}{\beta} \left(\mathcal{R}_{\mathcal{L}^{\tau}_{\mathrm{CE}}}^{\eta}(f) - \frac{\eta K}{K-1}\log(1+(K-1)e^{2\tau})\right) \leq \mathcal{R}_{\mathcal{L}^{\tau}_{\mathrm{CE}}}(f) \leq 
\frac{1}{\beta}\left( \mathcal{R}^{\eta}_{\mathcal{L}^{\tau}_{\mathrm{CE}}} - \frac{\eta K}{K-1}\log(1+(K-1)e^{-2\tau})\right).
$$
For $\tilde{f}$, we have:
$$
\mathcal{R}_{\mathcal{L}^{\tau}_{\mathrm{CE}}}(\tilde{f}) - \mathcal{R}_{\mathcal{L}^{\tau}_{\mathrm{CE}}}(f^{\star}) \leq  \frac{1}{\beta}\left( \frac{\eta K}{K-1} \log A^{K}_{\eta}+\mathcal{R}_{\mathcal{L}^{\tau}_{\mathrm{CE}}}^{\eta}(\tilde{f}) - \mathcal{R}^{\eta}_{\mathcal{L}^{\tau}_{\mathrm{CE}}}(f^{\star})\right),
$$
where $A^{K}_{\eta} = \left(\frac{1+(K-1)e^{2\tau}}{1+(K-1)e^{-2\tau}}\right)$. Since $\eta \leq 1-\frac{1}{K}$, $f^{\star}$ is the global minimizer of $\mathcal{R}_{\mathcal{L}^{\tau}_{\mathrm{CE}}}(\tilde{f})$ and $\tilde{f}$ is the global minimizer of $\mathcal{R}^{\eta}_{\mathcal{L}^{\tau}_{\mathrm{CE}}}(\tilde{f})$, we have
$$
 0 \leq \mathcal{R}_{\mathcal{L}^{\tau}_{\mathrm{CE}}}(\tilde{f}) - \mathcal{R}_{\mathcal{L}^{\tau}_{\mathrm{CE}}}(f^{\star}) \leq \frac{\eta K}{(1-\eta)K-1} \cdot A^{K}_{\tau},
$$
which concludes the proof.
\end{proof}

\section{Proof of Theorem~\ref{tm:asym_robust}}
\label{app:proofs_3}

\begin{proof}
For asymmetric label noise, we have
\begin{align*}
\mathcal{R}_{\mathcal{L}^{\tau}_{\mathrm{CE}}}^\eta(f) = & \mathbb{E}_{(\boldsymbol{x}, y)\sim\mathcal{P}^\eta_\text{noisy}}\left[\mathcal{L}^{\tau}_{\mathrm{CE}}(f(\boldsymbol{x}), y)\right] \\
=& \mathbb{E}_{(\boldsymbol{x}, y^{\star})\sim \mathcal{P}_\text{clean}}\left[(1-\eta_{i})\mathcal{L}^{\tau}_{\mathrm{CE}}(f(\boldsymbol{x}), y^{\star})\right] + \mathbb{E}_{(\boldsymbol{x}, y^{\star})\sim \mathcal{P}_\text{clean}}\left[\sum_{j\neq y^{\star}} \eta_{ij} \mathcal{L}^{\tau}_{\mathrm{CE}}(f(\boldsymbol{x}), j)\right]\\
\leq&  \mathbb{E}_{(\boldsymbol{x}, y^{\star})\sim \mathcal{P}_\text{clean}} \left[(1-\eta_{i}) \left(K\log (1 + (K - 1) \cdot e^{2\tau}) - \sum_{j \neq y^{\star}} \mathcal{L}^{\tau}_{\mathrm{CE}}(f(\boldsymbol{x}), j)\right)\right] \\
&+ \mathbb{E}_{(\boldsymbol{x}, y^{\star})\sim \mathcal{P}_\text{clean}}\left[\sum_{j\neq y^{\star}} \eta_{ij} \mathcal{L}^{\tau}_{\mathrm{CE}}(f(\boldsymbol{x}), j)\right] \\
=& K\log (1 + (K - 1) \cdot e^{2\tau}) \cdot \mathbb{E}_{(\boldsymbol{x}, y^{\star})\sim \mathcal{P}_\text{clean}} (1-\eta_{i}) -\mathbb{E}_{(\boldsymbol{x}, y^{\star})\sim \mathcal{P}_\text{clean}}\left[\sum_{j \neq y} \lambda_j \mathcal{L}^{\tau}_{\mathrm{CE}}(f(\boldsymbol{x}), j)\right],
\end{align*}
where $\lambda_j = (1-\eta_i-\eta_{ij})$. 
On the other hand, we have
$$
\mathcal{R}_{\mathcal{L}^{\tau}_{\mathrm{CE}}}^\eta(f) \geq K\log (1 + (K - 1) \cdot e^{-2\tau}) \cdot \mathbb{E}_{(\boldsymbol{x}, y^{\star})\sim \mathcal{P}_\text{clean}} (1-\eta_{i}) -\mathbb{E}_{(\boldsymbol{x}, y^{\star})\sim \mathcal{P}_\text{clean}}\left[\sum_{j \neq y^{\star}} \lambda_j \mathcal{L}^{\tau}_{\mathrm{CE}}(f(\boldsymbol{x}), j)\right]
$$
Hence,
$$
\mathcal{R}_{\mathcal{L}^{\tau}_{\mathrm{CE}}}^\eta\left(f^\star\right)-\mathcal{R}_{\mathcal{L}^{\tau}_{\mathrm{CE}}}^\eta(\tilde{f}) \leq B^{K}_{\tau} + 
\mathbb{E}_{(\boldsymbol{x}, y^{\star})\sim \mathcal{P}_\text{clean}}\left[\sum_{j \neq y^{\star}} \lambda_j\left(\mathcal{L}^{\tau}_{\mathrm{CE}}\left(\tilde{f}_j(\boldsymbol{x}), j\right)-\left(\mathcal{L}^{\tau}_{\mathrm{CE}}\left(f^*(\boldsymbol{x}), j\right)\right)\right)\right]
$$
where $B^{K}_{\tau} = K\log\left(\frac{1+(K-1)e^{2\tau}}{1+(K-1)e^{-2\tau}}\right) \mathbb{E}_{p(\boldsymbol{x}, y)}\left(1-\eta_i\right)$. Let $q^{-}$ and $q^{+}$ denote the lower bound and the upper bound in Proposition \ref{prop:bound}. We assume $\mathcal{R}_{\mathcal{L}^{\tau}_{\mathrm{CE}}}^\eta\left(f^\star\right) = q^{-}$, i.e., $\mathcal{L}^{\tau}_{\mathrm{CE}}\left(f^\star(\boldsymbol{x}), y^{\star}\right)= q^{-}$, which is only satisfied iff $f^{\star}_{j}(\boldsymbol{x})=\tau$ when $j=y^\star$ and $f^{\star}_{j}(\boldsymbol{x})=-\tau$ when $j \neq y^{\star}$. In this case, $\mathcal{L}^{\tau}_{\mathrm{CE}}\left(f^\star(\boldsymbol{x}), j\right) = q^{+}, \forall j \neq y^{\star}$ and $\mathcal{L}^{\tau}_{\mathrm{CE}}\left(f^\star(\boldsymbol{x}), j\right) \leq q^{+}, \forall j \in [K]$. Since $f^{\star}$ is the global minimizer of $\mathcal{R}_{\mathcal{L}^{\tau}_{\mathrm{CE}}}(f)$ and $\lambda=\left(1-\eta_i-\eta_{i j}\right)>0$, we have
$$
\mathbb{E}_{(\boldsymbol{x}, y^{\star})\sim \mathcal{P}_\text{clean}}\left[\sum_{j \neq y^{\star}} \lambda_j\left(\mathcal{L}^{\tau}_{\mathrm{CE}}(\tilde{f}(\boldsymbol{x}), j)-\mathcal{L}^{\tau}_{\mathrm{CE}}\left(f^\star(\boldsymbol{x}), j\right)\right)\right] \leq 0.
$$
Thus, we have
$$
0 \leq \mathcal{R}_{\mathcal{L}}^\eta\left(f^\star\right)-\mathcal{R}_{\mathcal{L}}^\eta(\tilde{f}) \leq B^{K}_{\tau},
$$
which concludes the proof.
\end{proof}

\section{Proof of Proposition~\ref{prop:lip_condition}}
\label{app:proofs_4}

\begin{proof}
Let $\boldsymbol{p} = \sigma^{\prime}_{\tau}(\boldsymbol{z})$. Recall that $-\tau\leq\boldsymbol{z}_j\leq\tau, \forall j \in [K]$, we have:
$$
\frac{1}{1+(K-1)\cdot e^{2\tau}} \leq \boldsymbol{p}_j \leq \frac{1}{1+(K-1)\cdot e^{-2\tau}}, \forall j \in [K],
$$

Let $M^{K}_{\tau}$ and $N^{K}_{\tau}$ denote the lower and upper bound of $\boldsymbol{p}_{j}$.
Given that the base loss $\phi(\boldsymbol{p}_{y})$ satisfies the Lipschitz condition with constant $L$ on the domain. For any $\boldsymbol{p}_{y} \in [M^{K}_{\tau}, N^{K}_{\tau}]$, we have
$$
|\phi(\boldsymbol{p}_{y})-\phi(M^{K}_{\tau})| \leq L|\boldsymbol{p}_{y}-M^{K}_{\tau}| \leq L|N^{K}_{\tau}-M^{K}_{\tau}|,
$$
and
\begin{align*}
|\phi(\boldsymbol{p}_{y})| &=|\phi(\boldsymbol{p}_{y})-\phi(M^{K}_{\tau})+\phi(M^{K}_{\tau})| \\
& \leq|\phi(\boldsymbol{p}_{y})-\phi(M^{K}_{\tau})|+|\phi(M^{K}_{\tau})| \\
& \leq L|N^{K}_{\tau}-M^{K}_{\tau}|+|\phi(M^{K}_{\tau})| . 
\end{align*}

Since $N^{K}_{\tau}-M^{K}_{\tau} \geq 0, \forall \tau > 0$, we have:
$$
\left|\mathcal{L}^{\tau}_{\phi}\left(f(\boldsymbol{x} ; {\boldsymbol\theta}), y\right)\right| \leq L\left(N^{K}_{\tau} - M^{K}_{\tau}\right) + \left|\phi(M^{K}_{\tau})\right|,
$$
which concludes the proof.

\end{proof}

\section{Theoretical analysis under instance-dependent setting}
\label{app:theory_dependent}

In Theorem \ref{tm:sym_robust} and Theorem \ref{tm:asym_robust}, we have provably shown the noise-tolerant ability of cross-entropy loss with LogitClip. Here, we extend the theoretical analysis to an \emph{instance-dependent setting}, where the noise rate $\eta_{\boldsymbol{x}}$ is a function of instance $\boldsymbol{x}$ and $\eta_{\boldsymbol{x}j}$ may vary across classes $j$.

\begin{theorem}
\label{tm:dep_robust}
Under instance-dependent label noise with $1-\eta_{\boldsymbol{x}} >  \eta_{\boldsymbol{x}j} , \forall \boldsymbol{x}, j \neq y_{\boldsymbol{x}}^{\star}$, where $\eta_{\boldsymbol{x} j}=p(y=j \mid \boldsymbol{x}), \forall j \neq i$ and $\left.\left(1-\eta_{\boldsymbol{x}}\right)=p(y=i \mid \boldsymbol{x}, y^{\star}=i)\right)$, then
$$
0 \leq \mathcal{R}_{\mathcal{L}^{\tau}_{\mathrm{CE}}}^\eta\left(f^*\right)-\mathcal{R}_{\mathcal{L}^{\tau}_{\mathrm{CE}}}^\eta(\tilde{f}) \leq C^{K}_{\tau}
$$
where $C^{K}_{\tau} = K\log\left(\frac{1+(K-1)e^{2\tau}}{1+(K-1)e^{-2\tau}}\right) \mathbb{E}_{(\boldsymbol{x}, y^{\star})\sim \mathcal{P}_\text{clean}}\left(1-\eta_{\boldsymbol{x}}\right)>0$.
\end{theorem}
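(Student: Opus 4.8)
The plan is to mirror the proof of Theorem~\ref{tm:asym_robust} almost verbatim, replacing the class-conditional rates $\eta_i,\eta_{ij}$ by their instance-dependent counterparts $\eta_{\boldsymbol{x}},\eta_{\boldsymbol{x}j}$. The observation that makes this essentially free is that the asymmetric proof already operates entirely under the outer expectation $\mathbb{E}_{(\boldsymbol{x},y^\star)\sim\mathcal{P}_\text{clean}}$; letting the noise rates depend on $\boldsymbol{x}$ merely keeps these per-instance quantities inside that expectation and changes nothing structurally.

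First I would decompose the noisy risk by conditioning on $(\boldsymbol{x},y^\star=i)$, where the flip probabilities are $p(y=i\mid\boldsymbol{x})=1-\eta_{\boldsymbol{x}}$ and $p(y=j\mid\boldsymbol{x})=\eta_{\boldsymbol{x}j}$ for $j\neq i$:
\begin{align*}
\mathcal{R}_{\mathcal{L}^{\tau}_{\mathrm{CE}}}^\eta(f) = \mathbb{E}_{(\boldsymbol{x}, y^{\star})\sim \mathcal{P}_\text{clean}}\Big[(1-\eta_{\boldsymbol{x}})\,\mathcal{L}^{\tau}_{\mathrm{CE}}(f(\boldsymbol{x}), y^{\star}) + \sum_{j\neq y^{\star}} \eta_{\boldsymbol{x}j}\,\mathcal{L}^{\tau}_{\mathrm{CE}}(f(\boldsymbol{x}), j)\Big].
\end{align*}
Next I would write $\mathcal{L}^{\tau}_{\mathrm{CE}}(f(\boldsymbol{x}),y^\star)=\sum_{j=1}^K\mathcal{L}^{\tau}_{\mathrm{CE}}(f(\boldsymbol{x}),j)-\sum_{j\neq y^\star}\mathcal{L}^{\tau}_{\mathrm{CE}}(f(\boldsymbol{x}),j)$ and apply the total-sum bound $K\log(1+(K-1)e^{-2\tau})\leq\sum_{j=1}^K\mathcal{L}^{\tau}_{\mathrm{CE}}(f(\boldsymbol{x}),j)\leq K\log(1+(K-1)e^{2\tau})$ from Proposition~\ref{prop:bound}. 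This produces matching upper and lower envelopes for $\mathcal{R}^\eta_{\mathcal{L}^{\tau}_{\mathrm{CE}}}(f)$, each of the shape $K\log(1+(K-1)e^{\pm2\tau})\,\mathbb{E}(1-\eta_{\boldsymbol{x}})-\mathbb{E}[\sum_{j\neq y^\star}\lambda_{\boldsymbol{x}j}\,\mathcal{L}^{\tau}_{\mathrm{CE}}(f(\boldsymbol{x}),j)]$, where I set $\lambda_{\boldsymbol{x}j}\doteq 1-\eta_{\boldsymbol{x}}-\eta_{\boldsymbol{x}j}$. The hypothesis $1-\eta_{\boldsymbol{x}}>\eta_{\boldsymbol{x}j}$ guarantees $\lambda_{\boldsymbol{x}j}>0$, which is precisely the weight positivity the final step needs.

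Subtracting the lower envelope of $\tilde f$ from the upper envelope of $f^\star$ then gives
\begin{align*}
\mathcal{R}_{\mathcal{L}^{\tau}_{\mathrm{CE}}}^\eta(f^\star) - \mathcal{R}_{\mathcal{L}^{\tau}_{\mathrm{CE}}}^\eta(\tilde{f}) \leq C^K_\tau + \mathbb{E}_{(\boldsymbol{x}, y^{\star})\sim \mathcal{P}_\text{clean}}\Big[\sum_{j\neq y^\star}\lambda_{\boldsymbol{x}j}\big(\mathcal{L}^{\tau}_{\mathrm{CE}}(\tilde{f}(\boldsymbol{x}),j)-\mathcal{L}^{\tau}_{\mathrm{CE}}(f^\star(\boldsymbol{x}),j)\big)\Big],
\end{align*}
with $C^K_\tau$ as stated. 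The main obstacle is the only non-mechanical step: showing the residual expectation is non-positive. This reuses the characterization of the clean-risk minimizer $f^\star$ from the proof of Theorem~\ref{tm:asym_robust} --- since $f^\star$ minimizes $\mathcal{R}_{\mathcal{L}^{\tau}_{\mathrm{CE}}}(f)$, on each instance it drives the on-target loss $\mathcal{L}^{\tau}_{\mathrm{CE}}(f^\star(\boldsymbol{x}),y^\star)$ to its lower bound, which forces each off-target loss $\mathcal{L}^{\tau}_{\mathrm{CE}}(f^\star(\boldsymbol{x}),j)$, $j\neq y^\star$, to its maximal value, so that $\mathcal{L}^{\tau}_{\mathrm{CE}}(\tilde f(\boldsymbol{x}),j)-\mathcal{L}^{\tau}_{\mathrm{CE}}(f^\star(\boldsymbol{x}),j)\leq 0$ pointwise. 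Together with $\lambda_{\boldsymbol{x}j}>0$, the whole residual is non-positive, yielding the upper bound $C^K_\tau$. The lower bound $0$ is immediate, since $\tilde f$ is by definition the global minimizer of $\mathcal{R}^\eta_{\mathcal{L}^{\tau}_{\mathrm{CE}}}$ and hence $\mathcal{R}^\eta_{\mathcal{L}^{\tau}_{\mathrm{CE}}}(f^\star)\geq\mathcal{R}^\eta_{\mathcal{L}^{\tau}_{\mathrm{CE}}}(\tilde f)$.
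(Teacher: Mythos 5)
Your proposal reproduces the paper's proof of Theorem~\ref{tm:dep_robust} essentially step for step: the same conditional decomposition of the noisy risk, the same upper/lower envelopes obtained from the total-sum bound of Proposition~\ref{prop:bound} with weights $\lambda_{\boldsymbol{x}j}=1-\eta_{\boldsymbol{x}}-\eta_{\boldsymbol{x}j}>0$, and the same reuse of the characterization of $f^\star$ from the proof of Theorem~\ref{tm:asym_robust} to show the residual expectation is non-positive. Note only that you thereby inherit the same implicit assumption the paper makes in that cited step (that the clean-risk minimizer pointwise attains the loss lower bound $q^-$), so your argument is exactly as strong, and as conditional, as the paper's own.
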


\begin{proof}
For instance-dependent label noise, we have
\begin{align*}
\mathcal{R}_{\mathcal{L}^{\tau}_{\mathrm{CE}}}^\eta(f) = & \mathbb{E}_{(\boldsymbol{x}, y)\sim\mathcal{P}^\eta_\text{noisy}}\left[\mathcal{L}^{\tau}_{\mathrm{CE}}(f(\boldsymbol{x}), y)\right] \\
=& \mathbb{E}_{\mathcal{P}_{\boldsymbol{x}}}\mathbb{E}_{\mathcal{P}_{y^{\star} \mid \boldsymbol{x}}} \mathbb{E}_{\mathcal{P}_{y \mid \boldsymbol{x}, y^{\star}}}\left[\mathcal{L}^{\tau}_{\mathrm{CE}}(f(\boldsymbol{x}), y)\right] \\
=& \mathbb{E}_{(\boldsymbol{x}, y^{\star})\sim \mathcal{P}_\text{clean}}\left[(1-\eta_{\boldsymbol{x}})\mathcal{L}^{\tau}_{\mathrm{CE}}(f(\boldsymbol{x}), y^{\star})\right] + \mathbb{E}_{(\boldsymbol{x}, y^{\star})\sim \mathcal{P}_\text{clean}}\left[\sum_{j\neq y^{\star}} \eta_{\boldsymbol{x}j} \mathcal{L}^{\tau}_{\mathrm{CE}}(f(\boldsymbol{x}), j)\right]\\
\leq&  \mathbb{E}_{(\boldsymbol{x}, y^{\star})\sim \mathcal{P}_\text{clean}} \left[(1-\eta_{\boldsymbol{x}}) \left(K\log (1 + (K - 1) \cdot e^{2\tau}) - \sum_{j \neq y^{\star}} \mathcal{L}^{\tau}_{\mathrm{CE}}(f(\boldsymbol{x}), j)\right)\right] \\
&+ \mathbb{E}_{(\boldsymbol{x}, y^{\star})\sim \mathcal{P}_\text{clean}}\left[\sum_{j\neq y^{\star}} \eta_{\boldsymbol{x}j} \mathcal{L}^{\tau}_{\mathrm{CE}}(f(\boldsymbol{x}), j)\right] \\
=& K\log (1 + (K - 1) \cdot e^{2\tau}) \cdot \mathbb{E}_{(\boldsymbol{x}, y^{\star})\sim \mathcal{P}_\text{clean}} (1-\eta_{\boldsymbol{x}}) -\mathbb{E}_{(\boldsymbol{x}, y^{\star})\sim \mathcal{P}_\text{clean}}\left[\sum_{j \neq y} \lambda_{\boldsymbol{x}j} \mathcal{L}^{\tau}_{\mathrm{CE}}(f(\boldsymbol{x}), j)\right],
\end{align*}
where $\lambda_j = (1-\eta_i-\eta_{\boldsymbol{x}j})$. 
On the other hand, we have
$$
\mathcal{R}_{\mathcal{L}^{\tau}_{\mathrm{CE}}}^\eta(f) \geq K\log (1 + (K - 1) \cdot e^{-2\tau}) \cdot \mathbb{E}_{(\boldsymbol{x}, y^{\star})\sim \mathcal{P}_\text{clean}} (1-\eta_{\boldsymbol{x}}) -\mathbb{E}_{(\boldsymbol{x}, y^{\star})\sim \mathcal{P}_\text{clean}}\left[\sum_{j \neq y^{\star}} \lambda_{\boldsymbol{x}j} \mathcal{L}^{\tau}_{\mathrm{CE}}(f(\boldsymbol{x}), j)\right]
$$
Hence,
$$
\mathcal{R}_{\mathcal{L}^{\tau}_{\mathrm{CE}}}^\eta\left(f^\star\right)-\mathcal{R}_{\mathcal{L}^{\tau}_{\mathrm{CE}}}^\eta(\tilde{f}) \leq C^{K}_{\tau} + 
\mathbb{E}_{(\boldsymbol{x}, y^{\star})\sim \mathcal{P}_\text{clean}}\left[\sum_{j \neq y^{\star}} \lambda_{\boldsymbol{x}j}\left(\mathcal{L}^{\tau}_{\mathrm{CE}}\left(\tilde{f}_j(\boldsymbol{x}), j\right)-\left(\mathcal{L}^{\tau}_{\mathrm{CE}}\left(f^*(\boldsymbol{x}), j\right)\right)\right)\right]
$$
where $C^{K}_{\tau} = K\log\left(\frac{1+(K-1)e^{2\tau}}{1+(K-1)e^{-2\tau}}\right) \mathbb{E}_{(\boldsymbol{x}, y^{\star})\sim \mathcal{P}_\text{clean}}\left(1-\eta_{\boldsymbol{x}}\right)$. From the proof of Theorem \ref{tm:asym_robust}, we have $\mathcal{L}^{\tau}_{\mathrm{CE}}(\tilde{f}(\boldsymbol{x}), j)-\mathcal{L}^{\tau}_{\mathrm{CE}}\left(f^\star(\boldsymbol{x}), j\right) \leq 0, \forall \boldsymbol{x}, j \neq y_{\boldsymbol{x}}^{\star}$. Recall that $\lambda=\left(1-\eta_{\boldsymbol{x}}-\eta_{\boldsymbol{x} j}\right)>0$, we have 
$$
\mathbb{E}_{p(\boldsymbol{x}, y)}\left[\sum_{j \neq y^{\star}} \lambda_{\boldsymbol{x}j}\left(\mathcal{L}^{\tau}_{\mathrm{CE}}(\tilde{f}(\boldsymbol{x}), j)-\mathcal{L}^{\tau}_{\mathrm{CE}}\left(f^\star(\boldsymbol{x}), j\right)\right)\right] \leq 0.
$$
Thus, we have
$$
0 \leq \mathcal{R}_{\mathcal{L}}^\eta\left(f^\star\right)-\mathcal{R}_{\mathcal{L}}^\eta(\tilde{f}) \leq C^{K}_{\tau},
$$
which concludes the proof.
\end{proof}

\section{More details on experimental setup}
\label{app:exp_setup}

\paragraph{Hyperparameter setting.} We conduct all the experiments on NVIDIA GeForce RTX 3090, and implement all methods by PyTorch. We tune the hyperparameters for all compared methods and find that the optimal settings basically match those in their original papers. Specifically, for GCE, we set $q=0.7$. For SCE, we set $\alpha=0.5$ and $\beta=1.0$. For AEL loss, we set $a=2.5$. For AUL loss, we set $a=5.5$ and $q=3$. For PHuber-CE, we set $\tau=10$ for CIFAR-10 and $\tau=30$ for CIFAR-100 and WebVision. For the experiments of NCE+MAE on CIFAR-100 and WebVision, we set $\alpha=50$ and $\beta=1$. For NCE+AGCE on CIFAR-100, we set $\alpha=50$, $\beta=0.1$, $a=1.8$ and $q=3.0$. On WebVision, we set the hyperparameters of NCE+AGCE as $\alpha=50$, $\beta=0.1$, $a=2.5$, and $q=3.0$. For the best $\tau$ of our LogitClip, it may depend on the dataset, noise type, and the base loss. In Table~\ref{tab:param}, we present the best values of $1/\tau$ in CE with LogitClip.
\begin{table*}[h]
\footnotesize
\centering
\renewcommand\arraystretch{1}
\vspace{0.4cm}
\caption{ Best Values of $1/\tau$ for CE+LogitClip on different datasets with various noise settings.} 
\label{tab:param}
\resizebox{0.7\textwidth}{!}{
\setlength{\tabcolsep}{2mm}{
\begin{tabular}{cccccc}
\toprule
 Dataset &  Symmetric-20\% &  Symmetric-50\% & Asymmetric & Dependent & Real-world \\
\midrule
 CIFAR-10 &1.0&  1.5 & 2.5 &  2.0 & 2.5 \\
\midrule
 CIFAR-100 & 0.5 &  0.5 & 2.5 & 0.5 & 0.5 \\
 \midrule
 WebVision & \multicolumn{5}{c}{1.2} \\
\bottomrule
\end{tabular}
}}
\end{table*}

\section{More empirical results}
\label{app:emp_result}

\subsection{Can LogitClip improve deep learning methods?}
\label{app:dividemix}

In the experiments shown in Section \ref{sec:experiments}, we show that our method can consistently improve the noise robustness of existing popular losses, including non-robust losses and robust losses. One may raise the question: Can LogitClip improve deep learning methods? Here, we use DivideMix \citep{li2020dividemix} as the representative method to show the universality of our LogitClip. For the experiments with DivideMix, we use the same setting as those reported in the paper of DivideMix. Specifically, we use an 18-layer PreAct Resnet \citep{he2016deep} and train it using SGD with a momentum of 0.9, a weight decay of 0.0005, and a batch size of 128. The network is trained for 300 epochs. The warm-up period is 10 epochs for CIFAR-10. For the hyperparameters, we set $M=2, T=0.5$, $\alpha=4$, and $\tau=0.5$. For DivideMix + LogitClip, we employ the logit clipping for all the model outputs during training. The test performance for DivideMix on noisy CIFAR-10 is reported in Table~\ref{tab:dividemix}. From the results, we can observe that our LogitClip can consistently improve the performance of DivideMix with a meaningful margin, which validates the universality of our method in boosting noise robustness.

\begin{table*}[!h]
\footnotesize
\centering
\renewcommand\arraystretch{1}
\vspace{0.4cm}
\caption{ Test performance comparison for DivideMix \citep{li2020dividemix} on noisy CIFAR-10 with different noisy types. The results show that our method can boost the performance of DivideMix.} 

\label{tab:dividemix}
\resizebox{0.7\textwidth}{!}{
\setlength{\tabcolsep}{3mm}{
\begin{tabular}{ccccc}
\toprule
 Method &   Symmetric-50\% & Asymmetric & Dependent & Real-world \\
\midrule
DivideMix      & 94.41 & 92.02 & 94.11 & 92.24 \\
+LC(Ours) & \textbf{95.15} & \textbf{92.73} & \textbf{95.25} & \textbf{93.16} \\
\bottomrule
\end{tabular}
}}
\end{table*}

\newpage
\subsection{Logit Clipping vs. Norm Regularization}
\label{app:normreg}

As demonstrated in Subsection~\ref{sec:method}, our training objective can be formalized as a constrained optimization with inequality constraint. Therefore, we may consider an alternative method by simply adding the constraint via the Lagrangian multiplier, termed Norm Regularization:
\begin{equation*}
\mathcal{L}_{\text{logit\_penalty}}(f(\boldsymbol{x};{\theta}),y) = 
\mathcal{L}_{\text{CE}}(f(\boldsymbol{x};{\theta}),y) + \lambda {\|f(\boldsymbol{x};{\theta})\|}_2.
\end{equation*}
In the experiments, we select the best $\lambda$ in $\{0.01, 0.05, 0.1, 0.5\}$. Our results in Figure~\ref{fig:reg} show that Norm Regularization is inferior to our LogitClip across four noise types, while both methods improve the test accuracy compared to Cross Entropy loss. With Norm Regularization, we notice that the trained network can suffer from optimization difficulty and sometimes fail to converge if $\lambda$ is too large (which is needed to regularize the logit norm effectively). Overall, we show that simply constraining the logit norm during training cannot achieve comparable performance as our LogitClip, which significantly improves the noise robustness.
\begin{figure}[h]
    \centering
    \includegraphics[width=0.32\textwidth]{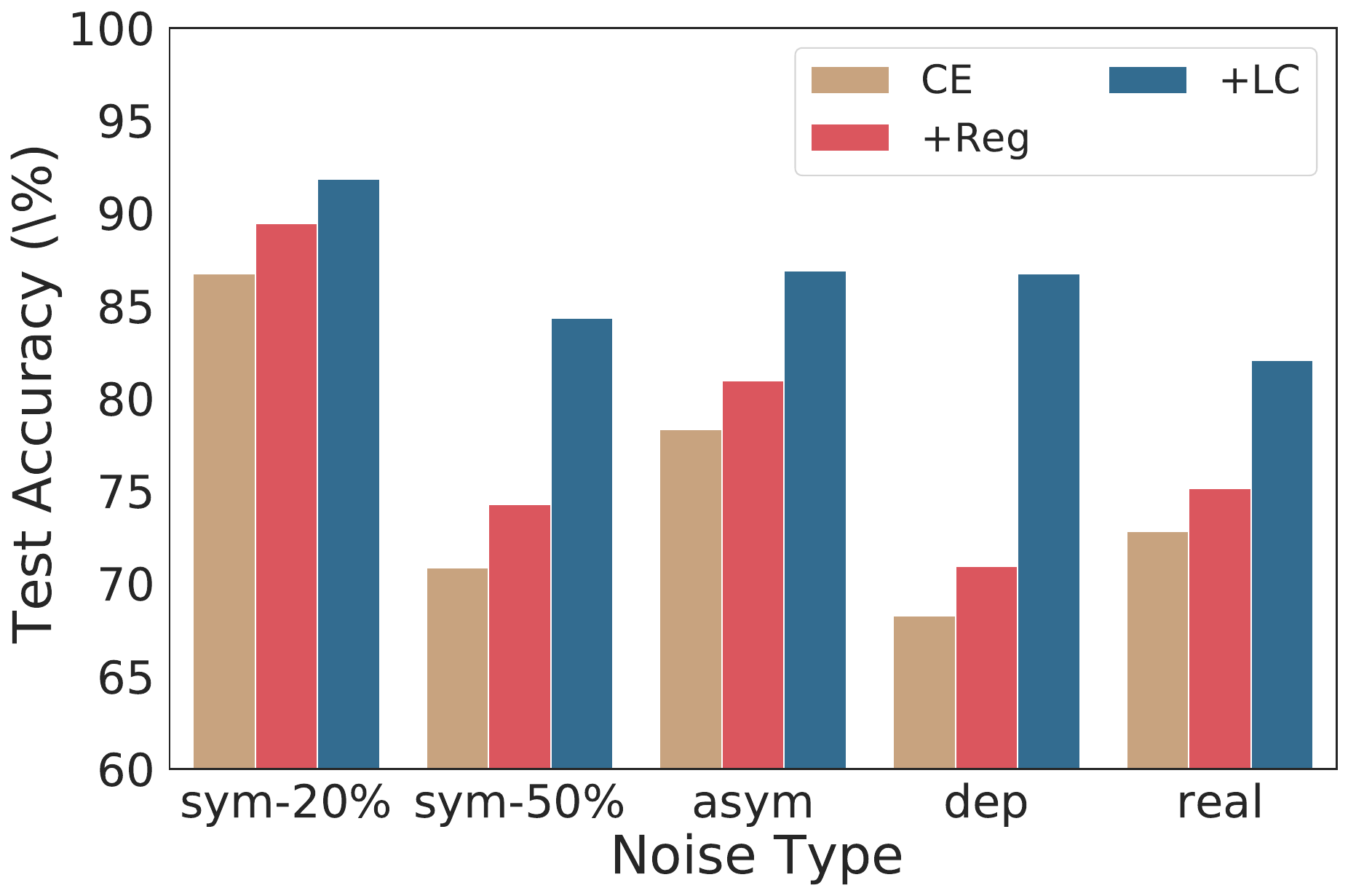}
    \caption{Test performance comparison among CE, Norm Regularization (+Reg), and our method (+LC) on noisy CIFAR-10 across different noise types.}
    \label{fig:reg}
    \vspace{-15pt}
\end{figure}

\subsection{Performance on Clean datasets}
\label{app:clean_exp}

We evaluate the performance of LogitClip on the clean CIFAR-100 dataset with different $\tau$. As discussed in Section~\ref{sec:logitclip} (Proposition 3.1), LogitClip is equivalent to vanilla cross-entropy loss if the $\tau$ is sufficiently large. Contrastively, a small $\tau$ will induce a large lower bound on the loss value, which may lead to difficulty in loss optimization (Underfitting). As shown in Table~\ref{tab:clean_param}, LogitClip with a large $\tau$ achieves comparable performance as the vanilla CE loss. Besides, the performance of LogitClip can be degraded as we decrease the value of $\tau$, which validates the underfitting issue in our analysis.

\begin{table*}[h]
\footnotesize
\centering
\renewcommand\arraystretch{1}
\caption{Performance of CE+LogitClip under different $\tau$ on the clean CIFAR-100 dataset.} 
\label{tab:clean_param}
\resizebox{0.7\textwidth}{!}{
\setlength{\tabcolsep}{2mm}{
\begin{tabular}{cccccccc}
\toprule
$\tau$ &	20 &	10 & 2 & 1 & 0.5 & 0.25 & CE \\
\midrule
Accuracy & \textbf{75.90} & 75.51 & 74.38 & 73.23 & 68.00 & 46.04 & \textbf{75.98}\\
\bottomrule
\end{tabular}
}}
\end{table*}




\subsection{Logit Clipping can improve the latest SOTA methods}
\label{app:sota}
In this paper, we mainly focus on improving existing robust losses. Besides, we show our method can also enhance some other deep learning methods by improving DivideMix (see Appendix~\ref{app:dividemix}). In the tables~\ref{tab:moresota}, we provide empirical evidence that our method can meaningfully improve SOP \citep{liu2022sop} and SAM \citep{foret2021sharpnessaware}. We use the same settings as Section \ref{sec:experiments} of our paper for SAM, while adopting the settings reported in their paper for SOP. These results further demonstrate the complementarity of LogitClip with previous techniques.

\begin{table*}[!h]
\footnotesize
\centering
\renewcommand\arraystretch{1}
\vspace{0.4cm}
\caption{ Test performance comparison for SOP \citep{liu2022sop} and SAM \citep{foret2021sharpnessaware} on noisy CIFAR-10/100 with different noisy types. The results show that our method can boost the performance of SAM and SOP.} 

\label{tab:moresota}
\resizebox{0.8\textwidth}{!}{
\setlength{\tabcolsep}{1mm}{
\begin{tabular}{ccccc}
\toprule
 Method &   CIFAR10-Sym-50\% & CIFAR10-ASym-40\% & CIFAR100-Sym-50\% & CIFAR100-ASym-40\% \\
\midrule
SOP      & 88.31 & 84.43 & 61.68 & 66.89 \\
+LC(Ours) & \textbf{89.20} & \textbf{86.03} & \textbf{63.85} & \textbf{70.41} \\
SAM      & 86.16 & 91.60 & 55.23 & 51.90 \\
+LC(Ours) & \textbf{89.29} & \textbf{91.80} & \textbf{67.17} & \textbf{72.08} \\
\bottomrule
\end{tabular}
}}
\end{table*}

\end{document}